\newcommand{\identityf}[1]{\mathbf 1_{\{#1\}}}
\newtheorem{theorem}{\textbf{Theorem}}
\newtheorem{lemma}{\textbf{Lemma}}
\begin{document}

\title{Heterogeneous Multi-Player Multi-Armed Bandits Robust To Adversarial Attacks}

\author{%
  \IEEEauthorblockN{Akshayaa Magesh,
                    Venugopal V.~Veeravalli}
  \IEEEauthorblockA{%
                   ECE \& CSL, The Grainger College of Engineering, University of Illinois at Urbana-Champaign, Champaign, IL, USA \\
                    \{amagesh2, vvv\}@illinois.edu}
}



\maketitle

\begin{abstract}
We consider a multi-player multi-armed bandit setting in the presence of adversaries that attempt to negatively affect the rewards received by the players in the system. The reward distributions for any given arm are heterogeneous across the players. In the event of a collision (more than one player choosing the same arm), all the colliding users receive zero rewards. The adversaries use collisions to affect the rewards received by the players, i.e., if an adversary attacks an arm, any player choosing that arm will receive zero reward. At any time step, the adversaries may attack more than one arm. It is assumed that the players in the system do not deviate from 
   a pre-determined policy used by all the players, and that the probability that none of the arms face adversarial attacks is strictly positive at every time step. In order to combat the adversarial attacks, the players are allowed to communicate using a single bit for $O(\log T)$ time units, where $T$ is the time horizon, and each player can only observe their own actions and rewards at all time steps. We propose a {policy that is used by all the players, which} achieves near order optimal regret of order $O(\log^{1+\delta}T + W)$, where $W$ is total number of time units for which there was an adversarial attack on at least one arm.
\end{abstract}

\section{Introduction}
{T}{he} multi-armed bandit (MAB) is a well-studied model for sequential decision-making problems.  The work in \cite{lairob} formalized the stochastic MAB {problem}, and provided a lower bound on the average regret of order $\Omega(\log T)$ for time horizon $T$. Recently, there has been a growing interest in the study of \emph{multi-player} MAB settings, where there are $K$ agents simultaneously pulling the arms at each time step. The average system regret is defined with respect to the optimal assignment of arms that maximizes the sum of the expected rewards of all players.  
The event of multiple players pulling the same arm simultaneously is commonly referred to as a \emph{collision}, and {generally} leads to the players receiving reduced or zero rewards. Thus, while designing policies for the multi-player MAB setting, in addition to balancing the exploration-exploitation trade-off, it is important to control the number of collisions that occur.

Multi-player bandit models can be broadly classified into centralized and decentralized {models}. In the centralized setting, there exists a central controller that can coordinate the actions of all the players, and the multi-player problem can be reduced to a single agent problem. The decentralized system, where there is no central controller, can further be categorized based on the interaction between the agents. Some works assume that sensing occurs at the level of every individual player, i.e., the players have information on whether an arm is occupied before pulling it \cite{dileep}. There are works that assume that players can observe only their own actions and rewards, and can have constrained communication at certain time units \cite{evirgen, avnercomm}. A fully decentralized scenario, where players cannot communicate with each other in any manner, is studied \cite{kaufmann, asilomar, got, bistritz2020game, magesh2021decentralized}. 
Multi-player stochastic MABs in the decentralized setting are particularly relevant to dynamic spectrum access systems 
 \cite{biglieri2013principles}.
In these systems, the finite number of channels representing different frequency bands are treated as arms, the users in the network are treated as players, and the data rates received from the channels can be interpreted as rewards. 

In most of the prior work in the decentralized setting, the assumption is made that the reward distribution for any arm is the same (homogeneous) for all players. This setting was first considered in \cite{mega}, where an algorithm that combines a probabilistic $\epsilon$-greedy algorithm with a collision avoiding mechanism inspired by the ALOHA protocol is proposed. The decentralized setting with homogeneous reward distributions across the players was also studied in \cite{mc, anandkumar, perchet,besson}. In cognitive radio and uncoordinated dynamic spectrum access networks, the users are usually not colocated physically, and therefore the reward distributions for a given arm may be heterogeneous across users. There have been a few works that study such a heterogeneous setting. In \cite{dileep, hanawal}, the heterogeneous setting is studied under the assumption that the players are capable of sensing. A fully decentralized heterogeneous setting is studied in \cite{kaufmann, asilomar, got, bistritz2020game, magesh2021decentralized}, where players can observe only their own actions and rewards. In \cite{kaufmann, asilomar}, it is assumed that in the event of a collision, the colliding players get zero rewards, and the idea of forced collisions  is used to enable the players to communicate with one another and settle on the optimal assignment of arms. 
The works in \cite{bistritz2020game} and \cite{magesh2021decentralized} take a game-theoretic approach adapted from \cite{marden}, and provide near order-optimal regret guarantees in the zero rewards on collisions and non-zero rewards on collisions, respectively.

The above discussed works assume that all the players in the system follow  
{a pre-determined protocol}, and that there are no adversarial attacks against the players. However, for instance, in the scenario of uncoordinated spectrum access, there might be adversarial agents disrupting the rewards received by the players. Under such attacks, it is possible for players following the proposed protocols from the above works to incur linear regret \cite{mahesh2022multi}. Thus, it is of interest to develop algorithms for the multi-player MAB setting {that are} robust to adversarial attacks. The work in \cite{vial2021robust} considers the robust multi-player setting, where each player has access to a separate instance of a bandit, and all the bandit instances share the same reward distributions. The work in \cite{mahesh2022multi} studies a multi-player setting with a shared bandit instance among all players under the homogeneous reward distribution setting. The work in \cite{boursier2020selfish}  considers a setting where there are selfish players who aim to maximize their own rewards. 

\textbf{Our Contributions: } We consider a setting where at all time steps, each player can observe only their own actions and rewards. In order to combat adversarial attacks, we employ the use of one-bit communication rounds for $O(\log T)$ time units. The reward distributions of the arms across the players are heterogeneous. We consider the setting of zero rewards on collisions. We consider a strong adversarial model, where the adversaries use collisions to affect the rewards received by the players, i.e., if an adversary attacks a given arm, any player choosing that arm will receive zero reward. Thus, a player cannot differentiate between the adversarial attack and a collision from another player in the system. At any time {step}, the adversaries may attack more than one arm. The probability that there are no adversarial attacks on a given arm is strictly positive at every time step. Under this setting, we propose an algorithm that achieves regret that is nearly order-optimal with respect to the time horizon and additive in the number of time units for which adversarial attacks occur on any arm.

\section{System Model}\label{sec:system_model}

Consider a system with $K$ agents. The action space of each agent is the set of $M$ arms, i.e., $\mathcal{A}_k = [M] = \{1, \ldots, M\}$. Let the time horizon be denoted by $T$ and the action taken at time $t$ by agent $k$ be denoted by $a_{t,k}$. The action profile $\mathbf{a}_t$ is defined as $\mathbf{a}_t = [a_{t,1},..., a_{t,K}]$.  
We assume the reward distribution of each arm to have support $[0,1]$. 
Let $n(a_{t,k})$ denote the number of players playing arm $a_{t,k}$ (including player $k$). In the absence of any adversarial attacks, the reward received by player $k$ playing arm $m$, which is played by a total of $n(m)$ players (including player $k$) is denoted by $r_k(m,n(m))$. The reward is drawn from a distribution with mean 
\begin{equation}
 \tilde{\mu}_{k}(m,n(m)) = E[{\tilde{r}_k(m,n(m))}],   
\end{equation}
where the expectation is over the stochasticity of the rewards for given $k$, $m$ and $n(m)$.
Note that
\[
\tilde{\mu}_{k}(m,n(m)) = 0~~\text{for all~} n(m) \geq 2,
\]
i.e., colliding players receive zero rewards. 

The adversarial attacks at time $t$ are described by a binary valued vector $\mathbf{w_t} = [w_{t,1}, \ldots, w_{t,M}] \in \{0,1\}^M$, where $w_{t,m} = 1$  denotes that there is an adversary attacking arm $m$ {at time $t$}, and $w_{t,m} = 0$ denotes the absence of an attack on arm $m$ {at time $t$}. We assume that at each time $t$, the probability that there is no adversary attacking arm $m$ is bounded away from zero, i.e., $P\{w_{t,m} = 0\} > 0$, {for all $m \in [M]$}. We denote the reward received by player $k$ playing arm $m$ under an adversarial attack $\mathbf{w}$ by $r_k(m,n(m), \mathbf{w})$. If $w_{t,m} = 1$, then any player choosing arm $m$ receives zero reward. Thus, an adversarial attack on arm $m$ mimics a collision, and any player playing arm $m$ cannot differentiate between the adversarial attack and a collision from another player in the system.  If $w_{t,m} = 0$, players playing arm $m$ receive the same rewards they would receive in the absence of an adversary. The reward is drawn from a distribution with mean
\begin{equation}
    \mu_k(m,n(m), \mathbf{w}) = 
    \begin{cases}
    \mu_k(m,n(m)) \;\text{ if } w_m = 0, \\
    0   \qquad \qquad  \;\;\;\;  \text{ if } w_m = 1.
    \end{cases}
\end{equation}

The action space $\mathcal{A}$ of the players is the product space of the individual action spaces, \textit{i.e.}, $\mathcal{A} = \Pi_{k=1}^K \mathcal{A}_k$. Let $\mathbf{a^*} \in \mathcal{A} $ be such that
\begin{equation}\label{eq:optimal_action_profile}
    \mathbf{a^*} \in \mathop{\arg\max}_{\mathbf{a} \in \mathcal{A}} \sum_{k=1}^K \mu_{k}(a_k, n(a_k) , \mathbf{w^0} ),
\end{equation} 
where $\mathbf{w^0} = [0, \ldots, 0]$ (i.e., no adversarial attacks). Note that if the number of players $K$ is greater than the number of arms $M$, then some players will receive zero rewards under the optimal action profile. {Therefore}, we assume $K$ to be less than $M$, {which means}
\begin{equation}
    \mathbf{a^*} \in \mathop{\arg\max}_{\mathbf{a} \in \mathcal{A}} \sum_{k=1}^K \mu_{k}(a_k, 1),
\end{equation} 

In this work, we restrict our attention to the case where there is a unique\footnote{The assumption of a unique optimal matching has been made and justified in previous works, see, e.g., \cite{got, bistritz2020game}. This assumption is needed to establish the convergence of the proposed algorithm to the optimal action profile.} {\em optimal} matching $\mathbf{a^*}$. Let $J_1 = \sum_{k=1}^K \mu_{k}(a^*_k,n(a^*_k), \mathbf{w^0})$ be the system reward for the optimal matching, and $J_2$ the system reward for the second optimal matching. Define  $\Delta = \frac{J_1 - J_2}{2 M}$.
{We assume that the players do \emph{not} know $\Delta$.} 

The expected regret $R(T)$ during a time horizon $T$ is: 
\begin{equation}
  T \sum_{k=1}^K \mu_{k}(a^*_k, n(a^*_k), \mathbf{w^0}) - E[{\sum_{t=1}^T \sum_{k=1}^K \mu_{k}(a_{t,k},n(a_{t,k}), \mathbf{w_t}) }]  
\end{equation}
where the expectation is over the actions of the players and adversarial attacks. 

In order for the players to get estimates of the mean rewards of the arms, we assume that the players have unique IDs at the beginning of the algorithm, as in previous related works \cite{boutilier1996planning, magesh2021decentralized}.

In order to combat the adversarial attacks under a heterogeneous reward distribution setting, we employ one-bit communication rounds between the players at certain time {steps} which are unaffected by the adversaries. In a time horizon of $T$, we make use of $O(\log T)$ such one bit communication rounds. Apart from these time {steps}, it is assumed that players can only observe only their own actions and rewards. 

\section{Algorithm}

\begin{algorithm}[t]
   \caption{Policy for each player $k$}
   \label{alg:main}
\begin{algorithmic}
   \STATE {\bfseries Initialization:} Set $\hat{\mu}_{k}(m,1) = 0 $ for all $k \in [K]$ and $m \in [M]$. Let $L_T$ be the last epoch with time horizon $T$. Parameters $\delta>0$ and $\epsilon \in (0,1)$ are provided as inputs.
   \FOR{epoch $\ell = 1$ {\bfseries to} $L_T$}
   \STATE \textbf{Exploration phase:} Pull arms in order of unique IDs until $T_0 \ell^\delta$ reward observations without adversarial attacks
   \STATE \textbf{Matching phase:} Run Algorithm \ref{alg:matching} with input $\ell$ for $\tau_\ell = {c_2 \ell^{ \delta}}$ time units.  Count the number of `plays' where each action $m \in [M]$ was played that resulted in player $j$ being content: $$W^\ell(k,m) = \sum_{t = 1}^{\tau_\ell} \identityf{(a_{t,k} = m, S_{t,k} = C)}$$
   \STATE \textbf{Exploitation phase:} For $c_3 2^\ell$ time units, play the action played most frequently from epochs $\lceil \frac{\ell}{2}\rceil$ to $\ell$ that resulted in player $k$ being content: 
   $$a_j = \mathop{\arg\max}_m \sum_{i = \lceil \frac{\ell}{2}\rceil}^\ell W^i(k,m)$$
   \ENDFOR
\end{algorithmic}
\end{algorithm}

\begin{algorithm} 
   \caption{Matching phase algorithm}
   \label{alg:matching}
\begin{algorithmic}
   \STATE {\bfseries Initialization:} Let $\kappa > M$ and $\beta < \kappa$. Denote by $\hat{Z}_{t,k}$ the state of player $k$ at time $t$, and set  $\hat{Z}_{1,k} = [\Bar{a}_{1,k}, \Bar{u}_{1,k}, S_{1,k}],$ where $\Bar{a}_{1,k}\mathop{\sim}\limits^{\text{unif}}[M]$, $\Bar{u}_{1,k} = 0$ and $S_{1,k} = D$. Set $\tau_\ell = {c_2 \ell^\delta}$.  Input parameter $\epsilon \in (0,1)$. 
   \FOR{time $t = 1$ {\bfseries to} $\tau_\ell$}
   \STATE \textbullet~\textbf{Action dynamics:}
   \STATE If $S_{t,j} = C$, set action $a_{t,k}$ as:
   \[
   a_{t,k} = \begin{cases} \Bar{a}_{t,k} &\text{with prob} \;\; 1 - \epsilon^\kappa \\ a \in [M]\setminus{\Bar{a}_{t,k}}  &\text{with uniform prob} \;\; \frac{\epsilon^\kappa}{M-1}. \end{cases}
   \]
   \STATE If $S_{t,k} = D$, action $a_{t,k}$ is chosen uniformly from $[M]$.
   \STATE \textbullet~\textbf{Get utility:} Upon choosing action $a_{t,k}$, receive reward $r_k(a_{t,k},n(a_{t,k}), \mathbf{w_t})$. If $r_k(a_{t,k},n(a_{t,k}), \mathbf{w_t}) = 0$, the utility of the player ${u}_{t,k} = 0$. Else the utility ${u}_{t,k}$ is: 
   \[
   {u}_{t,k} = \hat{\mu}_{k}(a_{t,k},1).
   \]
   \STATE \textbullet~\textbf{State dynamics:} 
   \STATE If $S_{t,k} = C$ and $a_{t,k} = \Bar{a}_{t,k}$, set: 
   \begin{equation}\label{eq:content_update}
       {Z}_{t+1,k} = {Z}_{t,k}
   \end{equation}
   \STATE If $S_{t,k} = C$ and $a_{t,k} \neq \Bar{a}_{t,k}$, or $S_{t,k} = D$, set:
   \begin{equation}\label{state}
       {Z}_{t+1,k} = \begin{cases} [a_{t,k}, {u}_{t,k}, C] &\text{with prob} \;\; \epsilon^{1 - {u}_{t,k}} \\ [a_{t,k}, {u}_{t,k}, D] &\text{with prob} \;\; 1 - \epsilon^{1 - {u}_{t,k}} \end{cases} 
   \end{equation}
  \STATE \textbullet~\textbf{Synchronize moods:}
  \STATE Get $S_{t,k}$ from all $K$ players 
  \STATE If $S_{t+1,k} = C$ and $\prod\limits_{k=1}^K \identityf{S_{t,k} = C} = 1$, retain mood as $S_{t+1,k} = C$
  \STATE If $S_{t+1,k} = D$, retain mood as $S_{t+1,k} = D$
  \STATE If $S_{t+1,k} = C$ and $\prod\limits_{k=1}^K \identityf{S_{t,k} = C} = 0$, update mood as
  \begin{equation}\label{eq:update_mood}
      {S}_{t+1,k} = \begin{cases} C &\text{with prob} \;\; \epsilon^{\beta} \\ D &\text{with prob} \;\; 1 - \epsilon^\beta \end{cases}
  \end{equation}
   \ENDFOR
\end{algorithmic}
\end{algorithm}

The proposed policy for each player $k$ in the multi-player multi-armed bandit setting with heterogeneous reward distributions and adversarial attacks in presented in Algorithm \ref{alg:main}. The algorithm proceeds in epochs since we do not assume the knowledge of the time horizon $T$. The parameters $\delta$, $\epsilon$ and $\beta$ are inputs to the algorithm and further details are provided in the following sections. Let $L_T$ denote the number of epochs in time horizon $T$. Each epoch $\ell$ has three phases: Exploration, Matching and Exploitation. 

The exploration phase is for each player to obtain estimates of the mean rewards $\hat{\mu}_k(m,1, \mathbf{w^0})$ of each arm $m \in [M]$. 
As stated in the previous section, we assume that the players have been assigned unique IDs from $1$ to $K$. In this phase, the players pull the arms in the order of their unique IDs until they have $T_0 \ell^\delta$ reward observations without adversarial attacks, and then use one-bit communication rounds to synchronize moving to explore the next arm. The algorithm is detailed in the Appendix. Note that at every time instant $t$ in this phase, player $k$ is able to identify if there is an adversarial attack on their chosen arm $a_{t,k}$, since the protocol is set up such that there are no collisions among the players. Thus, when a player receives zero reward in the exploration phase, it signifies an adversarial attack on that arm, and they discard that reward observation. The exploration phase in each epoch lasts for $T_\mathrm{exp} = MT_0 \ell^\delta$ time units in the absence of any adversarial attacks. Otherwise, the exploration phase in each epoch lasts for $T_\mathrm{exp} \leq MT_0 \ell^\delta + W^\ell_\mathrm{exp}$ time units, where $W^\ell_\mathrm{exp}$ is the number of time units in the exploration phase in epoch $\ell$ during which there was an adversarial attack on at least one arm. During epoch $\ell$, if the estimated mean rewards obtained at the end of the exploration phase  do not deviate from the mean rewards by more than $\Delta$, the exploration phase of this epoch is \emph{successful}. Since we do not assume that the players have knowledge of $\Delta$, having increasing lengths of the exploration phases by setting $\delta > 0$ ensures that this phase is successful with high probability eventually (for large enough $\ell$). A lower bound on the probability of this event is provided in the Appendix, which will be used in the regret analysis.  

Once the players have estimates of the mean rewards of the arms, they need to converge to the action profile that maximizes the system reward $\mathbf{a^*}$ as defined in \eqref{eq:optimal_action_profile}. This is done in the matching phase of each epoch. Note that though $\mathbf{a^*}$ is defined with respect to the mean rewards in the absence of any adversarial attacks, the players will face adversarial attacks during the matching phase. For an action profile $\mathbf{a}$ and adversarial attack $\mathbf{w}$, the utility received by each player $k$ is defined as:
\begin{equation}
    u_k(\mathbf{a}, \mathbf{w}) = 
    \begin{cases}
        \hat{\mu}_k(a_k,1) &\text{if} \;\; n(a_k) = 1 \text{    and } w_{a_k} = 0 \\
        0 &\text{if} \;\; n(a_k) \geq 2 \text{ 
   or  } \;\; w_{a_k} = 1.
    \end{cases}
\end{equation}
The utility received by player $k$ depends on the actions of all the players and the adversarial attack on arm $a_k$. The matching phase is inspired from the work by \cite{marden}, in which a strategic form game is studied, where players are aware of only their own payoffs (utilities). A decentralized strategy is presented by \cite{marden} that leads to an efficient configuration of the players' actions. The works by \cite{got} and \cite{magesh2021decentralized} apply the strategy proposed by \cite{marden} to setting where there are no adversarial attacks. However, the algorithms proposed by both \cite{got} and \cite{magesh2021decentralized} would incur linear regret in the presence of adversarial attacks for $O(\log T)$ time units. This is because the adversary can attack any subset of arms during the matching phases in \cite{got} and \cite{magesh2021decentralized} for $O(\log T)$ time units which would cause the optimal action profile estimated during the matching phase to be wrong, and regret is incurred during the exploitation phase leading to overall linear regret. In our proposed algorithm (Algorithm \ref{alg:matching}), we adapt the payoff based scheme from \cite{marden} to account for the adversarial attacks, and introduce one bit communication rounds to combat the attacks.  In order to provide near order-optimal regret guarantees, we need to prove that our algorithm leads to an action profile that maximizes the sum of utilities (under $\mathbf{w^0}$) of the players even in the presence of adversarial attacks. This is analyzed in Section \ref{sec:matching_phase}. This phase proceeds for $\tau_\ell = c_2  \ell ^{ \delta}$ time units in epoch $\ell$, where $c_2$ is a constant.  As in the exploration phase, we need increasing lengths of matching phases ($\delta > 0$) to guarantee that the players identify the optimal action profile with high probability.

The action profile identified at the end of the matching phase is played in the exploitation phase for $c_3 2^\ell$ time units, where $c_3$ is a constant. As $\ell$ increases, the players get better estimates of the mean rewards and the probability of identifying the optimal action profile increases. Therefore, the length of the exploitation phase is set to be exponential in $\ell$. 
The constants $c_2$ and  $c_3$ are are chosen to be of the order of $T_e$, the time taken by the exploration phase.

\section{Main Result}\label{sec:main_result}

In this section, we provide the regret guarantees for the proposed algorithm in Algorithm \ref{alg:main}. The proof sketch for the theorem is provided below, and the full proof is provided in the Appendix. 

\begin{theorem} \label{thm:main_thm_mabs}
Given the system model specified in Section \ref{sec:system_model}, the expected regret of the proposed Algorithm \ref{alg:main} for a time-horizon $T$ and some $0< \delta <1 $ is $R(T) = O(\log^{1+\delta} T + W)$, where $W$ is total number of time units for which there was an adversarial attack on at least one arm.
\end{theorem}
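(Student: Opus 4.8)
The plan is to decompose the regret over the epochs and, within each epoch, over the three phases (exploration, matching, exploitation), and to show that each contribution is either bounded by $O(\log^{1+\delta} T)$ in aggregate or is absorbed into the additive adversarial term $W$. Let $L_T$ be the number of epochs; since the exploitation phase of epoch $\ell$ has length $c_3 2^\ell$ while exploration and matching have length $O(\ell^\delta)$, the total length of epoch $\ell$ is $\Theta(2^\ell)$, so $L_T = \Theta(\log T)$. The per-epoch exploration and matching phases cost at most $M T_0 \ell^\delta + c_2 \ell^\delta$ time units plus the number $W^\ell_{\mathrm{exp}} + W^\ell_{\mathrm{match}}$ of adversarially attacked slots in those phases; summing $\sum_{\ell=1}^{L_T} O(\ell^\delta) = O(L_T^{1+\delta}) = O(\log^{1+\delta} T)$ handles the deterministic part, and $\sum_\ell (W^\ell_{\mathrm{exp}} + W^\ell_{\mathrm{match}}) \le W$ handles the adversarial part. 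So the whole burden falls on the exploitation phases.

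For the exploitation phase of epoch $\ell$, regret is incurred only if the action profile played is not $\mathbf{a^*}$, or if there is an adversarial attack on an arm being played (each such slot contributes $O(1)$ and there are at most $W^\ell_{\mathrm{exploit}}$ of them, summing to $\le W$). So I must bound the probability $p_\ell$ that the profile $\arg\max_m \sum_{i=\lceil \ell/2\rceil}^\ell W^i(k,m)$ differs from $\mathbf{a^*}$ for some player, and show $\sum_\ell p_\ell \cdot c_3 2^\ell = O(\log^{1+\delta} T)$ — in fact I will aim for $\sum_\ell p_\ell 2^\ell = O(\log^{1+\delta} T)$, which forces $p_\ell$ to decay faster than any inverse polynomial in $2^\ell$, i.e. $p_\ell = o(\ell^{1+\delta}/2^\ell)$; the cleanest target is $p_\ell \le C \exp(-c\, \ell^\delta)$ for constants $C,c>0$, since then $\sum_\ell 2^\ell e^{-c\ell^\delta}$ converges (for $\delta<1$ this needs care, but $2^\ell e^{-c\ell^\delta}\to\infty$, so actually I need a bound like $p_\ell \le C 2^{-2\ell}$, which I get because $\tau_\ell$ and $T_0\ell^\delta$ grow — more precisely the matching-phase analysis of Section~\ref{sec:matching_phase} gives convergence to $\mathbf{a^*}$ with probability at least $1 - \epsilon^{\Theta(1)}$ per matching phase run, and the majority vote over epochs $\lceil \ell/2\rceil$ to $\ell$ amplifies this, while the exploration-phase success probability, bounded in the Appendix, tends to $1$ with a rate controlled by $\ell^\delta$). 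The key inputs I would invoke are: (i) the exploration phase of epoch $i$ is successful — estimated means within $\Delta$ of true means — with probability $\ge 1 - 2M\exp(-2 T_0 i^\delta \Delta^2)$ by Hoeffding; (ii) conditioned on all relevant exploration phases being successful, the matching phase of Algorithm~\ref{alg:matching} has a unique stochastically stable state corresponding to $\mathbf{a^*}$ (the Marden-style perturbed-Markov-chain argument, adapted for the adversary via the mood-synchronization bit), so a single run lands on $\mathbf{a^*}$ and stays content with probability bounded below by a constant depending on $\epsilon,\kappa,\beta$; and (iii) because the adversary succeeds in attacking no arm with probability bounded away from zero at each step (the $P\{w_{t,m}=0\}>0$ assumption, hence $P\{\mathbf{w}_t=\mathbf{w^0}\}>0$), over $\tau_\ell = c_2\ell^\delta$ steps the chain gets $\Omega(\ell^\delta)$ attack-free steps with high probability, which is what makes the matching-phase success probability approach $1$ at a rate governed by $\ell^\delta$. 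Combining (i)–(iii), a Chernoff/union bound on the majority vote $\sum_{i=\lceil \ell/2\rceil}^\ell W^i(k,m)$ over the $\Theta(\ell)$ epochs yields $p_\ell$ decaying like $\exp(-\Theta(\ell))$ up to lower-order factors, so that $\sum_\ell p_\ell 2^\ell$ converges and the exploitation regret from wrong profiles is $O(1)$, or at worst $O(\log^{1+\delta}T)$ once the slow $\ell^\delta$-rate in the early epochs is accounted for.

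Putting the pieces together: $R(T) \le \underbrace{\sum_{\ell=1}^{L_T} O(\ell^\delta)}_{\text{expl.+match. deterministic}} + \underbrace{W}_{\text{all attacked slots}} + \underbrace{\sum_{\ell=1}^{L_T} p_\ell \cdot c_3 2^\ell}_{\text{wrong-profile exploitation}} = O(\log^{1+\delta}T) + O(W) + O(\log^{1+\delta}T)$, which gives the claimed $O(\log^{1+\delta}T + W)$.

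The main obstacle I anticipate is step (ii)–(iii): making the perturbed-Markov-chain (stochastic stability) analysis of the matching phase go through \emph{in the presence of the adversary}. The adversary can corrupt utilities during the matching phase, which could in principle push the chain toward a non-optimal content state; the mood-synchronization one-bit round is designed to reset everyone to discontent whenever any player is discontent, but I need to show quantitatively that the resistance tree / radius computations identifying $\mathbf{a^*}$ as the unique stochastically stable configuration are not altered by corrupted steps, and that the fraction of corrupted steps within a matching phase is small enough (with high probability, using the $P\{\mathbf{w}_t=\mathbf{w^0}\}>0$ bound) that mixing to the correct stationary behavior still occurs. Quantifying the dependence of the matching-phase success probability on $\ell$ through the number of attack-free steps $\Omega(\ell^\delta)$ — and ensuring this rate, combined with the exploration success rate, is fast enough for $\sum_\ell p_\ell 2^\ell$ to converge — is the delicate part of the argument; everything else is routine summation and Hoeffding/Chernoff bounding.
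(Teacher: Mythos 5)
Your decomposition is the same as the paper's: bound exploration and matching by their total length $O(\log^{1+\delta}T)$ plus attacked slots absorbed into $W$, and bound exploitation regret by $\sum_\ell c_3 2^\ell p_\ell$ plus attacked slots, where $p_\ell$ is split into an exploration-failure event (some epoch in $\lceil \ell/2\rceil,\ldots,\ell$ has an estimate off by more than $\Delta$) and a matching-failure event conditioned on good estimates; the paper shows both are $O(e^{-\ell})$ beyond a finite epoch $\ell_0$, so the sum against $2^\ell$ is geometric.

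The genuine gap is in your control of the exploration-failure term. You invoke a per-epoch Hoeffding bound with $T_0 i^\delta$ samples, i.e., failure probability of order $\exp(-2T_0\Delta^2 i^\delta)$ for epoch $i$, and you condition the matching analysis on \emph{all} epochs $\lceil\ell/2\rceil,\ldots,\ell$ having good estimates. A union bound over these $\Theta(\ell)$ epochs then gives only $P(E^\ell) \leq C\,\ell\exp\bigl(-2T_0\Delta^2(\ell/2)^\delta\bigr)$, i.e., decay of order $\exp(-\Theta(\ell^\delta))$ with $\delta<1$ --- exactly the insufficient rate you yourself flagged for the matching part, since $2^\ell \exp(-c\ell^\delta)\to\infty$ and $\sum_{\ell\leq L_T} 2^\ell e^{-c\ell^\delta}$ is of order $T e^{-c'(\log T)^\delta}$, i.e., nearly linear in $T$ rather than polylogarithmic. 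You repair the analogous problem for the matching part by aggregating the vote over the $\Theta(\ell)$ matching phases (this is what Lemma \ref{lem:matching_phase_success_prob} does, with exponent $\Theta(\ell^{1+\delta})$ in the bound $(C_0 e^{-C_\rho\ell^\delta})^\ell$), but you leave the exploration rate at ``controlled by $\ell^\delta$'' and never amplify it. The paper closes this hole differently: the estimates $\hat{\mu}_k(m,1)$ are averages over \emph{all} attack-free samples accumulated up to epoch $i$, namely $T_0\sum_{j\leq i} j^\delta \geq T_0(i/2)^{1+\delta}$ samples, so the deviation probability after epoch $i$ is $\exp(-2T_0\Delta^2(i/2)^{1+\delta})$; the union over $i\in\{\lceil\ell/2\rceil,\ldots,\ell\}$ gives Lemma \ref{lem:exploration}, which is at most a constant times $e^{-\ell}$ once $\tfrac{T_0\Delta^2}{2}(\ell/4)^\delta\geq 1$ (the definition of $\ell_0$), after which $\sum_\ell 2^\ell\cdot 2KMe^{-\ell}$ converges and the first $\ell_0$ epochs contribute only the constant $2Kc_32^{\ell_0}$. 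To make your version go through you would need either this cumulative-sample bound or a substitute argument that the majority vote tolerates a minority of epochs with bad estimates, which you do not make. Your deferral of the adversarially-robust stochastic-stability analysis is acceptable as an invoked input (that is the content of the paper's resistance-tree lemmas and Lemma \ref{lem:matching_phase_success_prob}), but note that what the summation against $c_3 2^\ell$ actually requires is the quantitative form $P(F^\ell)\leq (C_0 e^{-C_\rho\ell^\delta})^\ell \leq e^{-\ell}$ for $\ell\geq\ell_0$, not merely a constant per-phase success probability plus a heuristic amplification.
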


\textbf{Proof Sketch}  Let $L_T$ be the last epoch within a  time-horizon of  $T$. The regret incurred during the $L_T$ epochs can be analyzed as the sum of the regret incurred during the three phases of the algorithm. The exploitation phase in epoch $\ell$ of the algorithm lasts for $c_3 2^{\ell}$ time units. Thus, it is easy to see that $L_T < \log{T}$. Let $R_1$, $R_2$ and $R_3$ denote the regret incurred  over $L_T$ epochs, during the exploration phase, the matching phase, and the exploitation phase, respectively. 

Since the exploration phase in each epoch $\ell$ proceeds for at most $MT_0 \ell^\delta + W^\ell_\mathrm{exp}$ time units, $R_1 \leq K M T_0 \log^{1 + \delta} T + K W^\ell_\mathrm{exp}$. Similarly, since the matching phase runs for $\tau_\ell = c_2  \ell^{\delta}$ time units in epoch $\ell$, $R_2 \leq K c_2  \log^{1+\delta}{T}.$ In the exploitation phase, regret is incurred in the following three events:
    \begin{enumerate}
        \item Let $E^\ell$ denote the event that there exists some player $k \in [K]$, arm $m \in [M]$, such that there exists some epoch $i$, with $\lceil \frac{\ell}{2} \rceil \leq i \leq \ell$, such that the estimate of the mean reward $\hat{\mu}_k(m,1)$ obtained after the exploration phase of epoch $i$ satisfies $|\hat{\mu}_k(m,1) - \mu_{k}(m,1)| \geq \Delta$.  
        \item Let $F^\ell$ denote the event that given that all the players have $|\mu_{k}(m,1) - \hat{\mu}_{k}(m,1)| \leq \Delta$ for all $m \in [M]$ and all $n \in [N]$ for all epochs $\lceil \frac{\ell}{2} \rceil $ to $ \ell$, the action profile chosen in the matching phase of epoch $\ell$ is not optimal. 
        \item There is an adversarial attack on any arm.
    \end{enumerate}

    We can show that for all epochs $\ell \geq \ell_0$ (for a finite $\ell_0$ defined in the Appendix):
 \begin{align}
     P(E^\ell) \leq 2KMe^{-\ell}, 
     P(F^\ell) \leq  e^{-\ell},
 \end{align}
 where the bound on the probability of event $F^\ell$ is provided in Lemma \ref{lem:matching_phase_success_prob}. This bounds the regret in the exploitation phases as $R_3 \leq 2Kc_32^{\ell_0} + \frac{2Kc_3 ( 2KM+ 1)}{e-2} + KW^\ell_\mathrm{exploit}$,  where $KW^\ell_\mathrm{exploit}$ is the number of time units in the exploitation phase of epoch $\ell$ where there was an adversarial attack on at least one arm. Thus 
\begin{equation}
    \begin{split}
    R(T) &= R_1 + R_2 +R_3 \\
    &\leq K M  T_0 \log^{1 + \delta} T + K c_2 \log^{1+\delta}{T} + 2Kc_32^{\ell_0} + \frac{2Kc_3 ( 2KM+ 1)}{e-2} + W \\
    &\sim O(\log^{1+\delta} T + W),
    \end{split}
\end{equation}
where $W$ is the number of time units for which there was an adversarial attack on any arm. Note that $W$ can be replaced by the number of time units for which there was an adversarial attack on any arm during the exploration and exploitation phases only.

\section{Matching Phase}\label{sec:matching_phase}

Strategic form games in game theory are used to model situations where players choose actions simultaneously and each player has a utility function $u_j: \mathcal{A} \to [0,1]$, that assigns a real valued payoff (utility) to each action profile $\mathbf{a} \in \mathcal{A}$. In order to pose our multi-player MAB problem as a strategic form game, we need to design the utility functions of the players in a way such that the system regret is minimized, or equivalently, the system performance is maximized. For an action profile $\mathbf{a}$ and adversarial attack $\mathbf{w}$, the utility function received by each player $k$ is defined as:
\begin{equation}
    u_k(\mathbf{a}, \mathbf{w}) = 
    \begin{cases}
        \hat{\mu}_k(a_k,1) &\text{if} \;\; n(a_k) = 1 \text{    and } w_{a_k} = 0 \\
        0 &\text{if} \;\; n(a_k) \geq 2 \text{ 
   or  } \;\; w_{a_k} = 1.
    \end{cases}
\end{equation}
Note that each player $k$ does not know $n(a_k)$ or $w_{a_k}$, but instead only observes the reward instance $r_k(a_k,n(a_k),\mathbf{w})$. Thus, each player $k$ calculates their utility received as follows:
\begin{equation}
    u_k(\mathbf{a}, \mathbf{w}) = 
    \begin{cases}
        \hat{\mu}_k(a_k,1) &\text{if} \;\; r_k(a_k,n(a_k),\mathbf{w}) > 0 \\
        0 &\text{if} \;\; r_k(a_k,n(a_k),\mathbf{w}) = 0.
    \end{cases}
\end{equation}

A similar utility function is used in \cite{got}, where it is assumed that collisions result in zero rewards for the colliding players. Note that in the work in \cite{got} where there are no adversarial attacks, the utility for a player $k$ being $0$ can be only due to a collision from another player. However, in our setting, the utility being $0$ can be due to a collision from another player or an adversarial attack. Thus, we need to design the matching phase such that the players are not affected by the adversarial attack causing the utility to be $0$, as described in Algorithm \ref{alg:matching}. To this effect we use one-bit communication rounds in the matching phase. Since there are $O(\log T)$ matching phases in a time horizon of $T$, we utilize $O(\log^{1+\delta}T)$ one-bit communication rounds in total. The action profile that maximizes the sum of the utilities is called an efficient action profile. If the exploration phase is successful, then by our choice of the utility function, the efficient action profile maximizing the sum of the utilities is also the same as the optimal action profile maximizing the sum of expected rewards under $\mathbf{w^0}$. This is detailed in the Appendix. 

The matching phase lasts for $\tau_\ell$ time units, and $\epsilon, \beta \in (0,1)$ are parameters of the algorithm. Each player $k$ is associated with a state $Z_{t,k} = [\Bar{a}_{t,k}, \Bar{u}_{t,k}, S_{t,k}]$ during time $t$, where $\Bar{a}_{t,k} \in [M]$ is the baseline action of the player, $\Bar{u}_{t,k} \in [0,1]$ is the baseline utility of the player and $S_{t,k} \in \{C,D\}$ is the mood of the player ($C$ denotes \enquote{content} and $D$ denotes \enquote{discontent}). When the player is content, the baseline action is chosen with high probability ($1 - \epsilon^\kappa$) and every other action is chosen with uniform probability. The parameter $\epsilon$ is provided as an input to the algorithm. If the player is discontent, the action is chosen uniformly from all arms and there is a high probability that the player would choose an arm different from the baseline action. The goal in designing the matching phase algorithm is for all the players to align their baseline actions to the efficient action profile and be content in this state. Note that the action dynamics do not depend on the utilities. We modify the state update step from \cite{marden} as follows. If the player is content and their baseline action matches the action played, the state remains the same. Note that in the algorithm in \cite{marden} (and subsequently in \cite{got}), both the baseline action and baseline utilities are compared to the action played and utility received when the player is content. However, in our setting, the utility can be changed by an adversary. Thus, we force a content player to stay content when they play their baseline action. In the case that a content player explores with probability $\epsilon^\kappa$ or if the player is discontent, the estimated state for the next play is chosen stochastically based on the utility. The rationale behind the particular probabilities chosen is that when the utility received is high, the player is more likely to be content. To compensate for the change in the state update step where we force a content player playing their baseline action to remain content to combat adversarial attacks, we introduce a one-bit communication round for the players to synchronize their moods. If a player $k$ is content and the rest of the players are content as well, player $k$ remains content. On the other hand, if a player $k$ is content and there is any player in the remaining $K-1$ players who is discontent, the player updates their state stochastically as in \eqref{eq:update_mood}. A discontent player remains discontent. During the matching phase algorithm, each player keeps a count of the number of times each arm was played that resulted in the player being content.
The action chosen by the player for the exploitation phase is the arm played most frequently from epochs $\lceil \frac{\ell}{2}\rceil$ to $\ell$ that resulted in the player being content.

In order to analyze the average regret, we provide guarantees on the estimation of the efficient action profile in the matching phase of our algorithm. The analysis relies on the theory of regular perturbed Markov decision processes \cite{young}. Using the theory of perturbed Markov chains in \cite{young}, we characterize the stochastically stable states (see \cite{marden}) for our setting with adversarial attacks and prove that the stochastically stable state that maximizes the sum of utilities is played for the majority of time in the matching phase with high probability. More specifically, the state with the baseline actions and utilities corresponding to the optimal action profile and all players being content is the stochastically stable state. 
In the following lemma, we bound the probability of the optimal action profile not being played during the exploitation phase of epoch $\ell$ (identified from the matching phase of epochs $\lceil\frac{\ell}{2}\rceil$ to $\ell$), given that event $E^\ell$ does not occur (\textit{i.e.}, the exploration phases of epochs $\lceil\frac{\ell}{2}\rceil$ to $\ell$ were successful).

    \begin{lemma}\label{lem:matching_phase_success_prob}
In some epoch $\ell$, let 
$\mathbf{a}^*=\mathop{\arg\max}\limits_{\mathbf{a}, \mathbf{w^0} } \sum_{k=1}^K u_k(\mathbf{a}, \mathbf{w^0}).$
and let $\mathbf{a'} = [a'_1,...,a'_K]$ where 
$a'_k$ is the action profile played in the exploitation phase of epoch $\ell$ by player $k$. Assume that for all players $k \in [K]$ and for all arms $m \in [M]$, the exploration phase is successful. Then
\[
P\{\mathbf{a^*} \neq \mathbf{a'}\} \leq \left(C_0 \exp{(- C_\rho \ell^{\delta})}\right)^\ell
\]
for some $C_0, C_\rho > 0$.
\end{lemma}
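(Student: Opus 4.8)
The plan is to analyze the matching phase as a regular perturbed Markov chain in the sense of Young~\cite{young}, with perturbation parameter $\epsilon$, and identify its stochastically stable states. First I would set up the unperturbed chain ($\epsilon = 0$, ignoring adversarial attacks): its absorbing states are exactly the configurations in which every player is content and no two baseline actions collide, together with the ``all discontent'' class. Then I would compute the resistances of transitions between these absorbing states. Leaving a content state where the baseline profile has sum-of-utilities $u$ requires at least one player to experiment (resistance $\kappa$ per experimenting player) and then, because of the mood synchronization step, triggers the remaining content players to re-draw their moods — each one staying content costs resistance $\beta$ while dropping to discontent costs $O(1)$. Reaching a new content profile $\mathbf{a}$ from the all-discontent class costs resistance $\sum_k (1 - u_k(\mathbf{a},\mathbf{w^0}))$. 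A rooted-tree (Markov-chain-tree) computation in the style of \cite{marden,young} then shows that the stochastic potential is minimized uniquely at the state $z^* = [\mathbf{a}^*, \mathbf{u}(\mathbf{a}^*), C^K]$, using $\kappa > M$ and $\beta < \kappa$ to dominate the $O(1)$ contributions and the uniqueness of the optimal matching to make the minimizer unique; this is where having modified the content-update rule (forcing a content player on its baseline action to stay content) is essential, since it makes the utility — and hence the adversary — drop out of the ``stay content'' transition entirely.

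Next I would handle the adversarial attacks. The key observation is that an attack on arm $m$ at time $t$ only zeroes out utilities; it never changes the action dynamics, and by the modified update rule it cannot dislodge a content player who is playing their baseline action. The only way an attack influences a transition is through the utility $u_{t,k}$ that appears in \eqref{state}, i.e. when a content player is experimenting or a discontent player is resampling. I would argue that adversarial attacks can only \emph{increase} resistances out of the desired state $z^*$ (an attacked player gets $u = 0$, so $\epsilon^{1-u} = \epsilon$, making ``content'' less likely) and can only fail to help transitions into $z^*$; hence the stochastic potential of $z^*$ remains the strict minimum regardless of $\mathbf{w}$. Combined with the assumption $P\{w_{t,m}=0\} > 0$ for all $m$, which guarantees the perturbed chain is still irreducible and aperiodic and that the ``good'' transitions retain positive probability, this yields that $z^*$ is the unique stochastically stable state for every attack pattern.

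The final step is the finite-time / non-asymptotic argument. From the stochastic-stability analysis, the stationary distribution $\pi_\epsilon$ of the matching chain satisfies $\pi_\epsilon(z^*) \geq 1 - C\epsilon^{\rho}$ for some $\rho > 0$ determined by the gap in stochastic potentials. A mixing-time bound (the chain on a fixed finite state space with transition probabilities bounded below by a power of $\epsilon$ mixes in time polynomial in $1/\epsilon$ and the diameter) shows that after $\tau_\ell = c_2 \ell^\delta$ steps the chain spends all but an exponentially small fraction of time in $z^*$, provided $\epsilon$ is chosen as a function of $\ell$ — specifically $\epsilon = \epsilon_\ell$ decaying in $\ell$ so that the exploration-rate $\epsilon^\kappa$ and the potential gap translate into the stated bound. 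Since the exploitation action is the argmax of the content-counts $\sum_{i=\lceil \ell/2\rceil}^{\ell} W^i(k,m)$ aggregated over epochs $\lceil \ell/2 \rceil$ to $\ell$, a Chernoff bound on the occupation time of $z^*$ in each of those $\Theta(\ell)$ matching phases gives that the per-epoch failure probability is at most $C_0 \exp(-C_\rho \ell^\delta)$, and independence across epochs (conditioned on the successful exploration events) multiplies these into $\left(C_0 \exp(-C_\rho \ell^\delta)\right)^\ell$. The main obstacle is the resistance-tree computation with the mood-synchronization step: one must verify carefully that the extra $\beta$-resistances introduced by re-sampling moods do not create a spurious alternative minimizer of the stochastic potential and that the bound $\beta < \kappa$ (together with $\kappa > M$) is exactly what rules this out — this is the technical heart where our construction departs from \cite{marden} and \cite{got}.
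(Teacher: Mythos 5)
Your overall route is the same as the paper's: characterize the matching dynamics as a regular perturbed Markov chain, show via resistance trees that the all-content state aligned with $\mathbf{a}^*$ (and $\mathbf{w^0}$) uniquely minimizes the stochastic potential despite the attacks, and then convert stochastic stability into the stated bound through a concentration argument on the occupation time of that state over the matching phases of epochs $\lceil \ell/2\rceil$ to $\ell$. However, two steps in your final conversion do not work as written. First, you let $\epsilon=\epsilon_\ell$ decay with $\ell$; the algorithm takes a \emph{fixed} $\epsilon$ as input, and the paper needs no such scheduling: it fixes $\epsilon$ small enough that the stationary probability of the optimal state exceeds $\tfrac{1}{2(1-\eta)}$, and the entire $\ell$-dependence of the bound comes from the growing phase length $\tau_\ell=c_2\ell^\delta$ through the Chernoff--Hoeffding bound for Markov chains of Chung et al., with the (fixed-$\epsilon$) mixing time absorbed into $C_\rho$. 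Second, and more substantively, your aggregation over epochs is logically gapped: the failure event is that the \emph{aggregate} counts $\sum_{i=\lceil \ell/2\rceil}^{\ell} W^i(k,m)$ have the wrong argmax, i.e.\ the total occupation of the optimal state over those phases is at most half the total time. That event is not the intersection of the per-epoch failure events, so multiplying per-epoch failure probabilities does not bound it (what that event structure licenses is a union bound, which yields only a single factor $e^{-C_\rho\ell^\delta}$, not the $\ell$-th power). The paper obtains the exponent $\ell$ by applying an exponential Markov inequality to the sum $\sum_i X_i$ of per-epoch occupation times and factoring the moment generating functions across (independent) epochs, each bounded by the per-epoch Chernoff--Hoeffding estimate; you would need this (or an equivalent) argument to legitimately reach $\left(C_0 e^{-C_\rho\ell^\delta}\right)^\ell$.

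A smaller but consequential inaccuracy is your identification of the $\epsilon=0$ recurrence classes as ``all content with non-colliding baseline actions'' plus the all-discontent class. Precisely because of the modified update rule (a content player on its baseline action stays content irrespective of utility), all-content states with colliding baseline actions, and all-content states whose baseline utilities are misaligned with the current action profile, are also recurrent; these are the classes $\mathcal{Z}_{C,i}$, $i\geq 2$, in the paper, and the resistance-tree computation must include them (this is exactly what Lemmas \ref{lem:resistances}--\ref{lem:stochastic_potential} handle, using $\kappa$ and $\beta<\kappa$ to rule them out as potential minimizers). Omitting them means the spanning-tree minimization is carried out over the wrong vertex set, so the claimed uniqueness of the minimizer is not yet established in your sketch.
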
 

A detailed discussion on the analysis of the matching phase algorithm and the proof of the above lemma is provided in the Appendix.

\section{Experimental Results}

In this section, we present some illustrative experimental results. We consider the case with $K = M = 3$. The mean rewards for players $1,2$ and $3$ for the arms $1,2$ and $3$ are $[0.8, 0.6, 0.4]$, $[0.2, 0.7, 0.3]$ and $[0.1, 0.7, 0.5]$, respectively. The rewards are drawn from beta distributions with the corresponding means. Thus, the optimal action profile is $[1,2,3]$ and the considered system has $\Delta = 0.03$. We set $\delta = 0$ in the experiments. We set $T_0 = c_2 = 2000$ and $c_3 = 10000$ and $\epsilon = 10^{-4}, \beta = 2, \kappa = 3$. The algorithm was run for $10$ epochs and the experiment was repeated for $100$ iterations and the accumulated regret averaged over the iterations. The adversary attacks one arm at random at each time step during the exploration and matching phases of all the epochs with probability $0.4$. 

From Figure \ref{one}, we see that the average accumulated regret grows sub-linearly with time. We can also observe that the average regret  incurred during the exploitation phase of each epoch is small, as the matching phase converges to the optimal action profile with high probability despite the presence of adversarial attacks.

\begin{figure}[t]
\begin{center}
\centerline{\includegraphics[scale = 0.8]{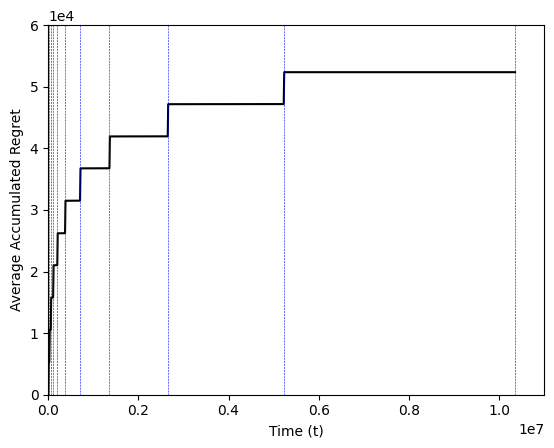}}
\caption{Average accumulated regret as a function of time}
\label{one}
\end{center}
\vskip -0.3in
\end{figure}

\appendices

\section{Exploration Phase}\label{apdx:exploration}

The algorithm for the exploration phase is detailed in Algorithm \ref{alg:exploration}.

\begin{algorithm}
   \caption{Exploration phase for player $k$ for epoch $\ell$}
   \label{alg:exploration}
\begin{algorithmic}
    \FOR{ $m = 1$ to $M$}
    \STATE \textbullet~Player $k$ pulls arm $(k + m - 1) \mod M$ until $T_0 \ell^\delta$ non-zero reward observations (i.e., without any adversarial attack) are recorded on that arm
    \STATE \textbullet~Player $k$ sends a one-bit message of $1$ to all the players to communicate their success in the exploration phase 
    \STATE \textbullet~Player $k$ remains on arm $(k + m - 1) \mod M$ until they receive a 1 from all the remaining players 
    \STATE \textbullet~Estimated mean reward $\hat{\mu}_{k}(m,1)$ of arm $(k + m - 1) \mod M$  is updated with the average of the non-zero rewards observed
    \ENDFOR
\end{algorithmic}
\end{algorithm}

The exploration phase is for player $k$ to obtain estimates of the mean rewards of arms $m \in [M]$. During the exploitation phase of epoch $\ell$, each player plays the most frequently played action during the matching phases of epochs $\lceil \frac{\ell}{2} \rceil$ to $\ell$ while being in a content state. Thus, we bound the probability that the estimated mean rewards deviate from the mean rewards by more than $\Delta$ in at least one of the epochs from $\lceil \frac{\ell}{2} \rceil$ to $\ell$, by $O(e^{-\ell})$.

\begin{lemma}\label{lem:exploration}
Given $\Delta$ as defined in Section \ref{sec:system_model}, for a fixed player $k$, arm $m$, let $E_{k,m}^\ell$ denote the event that there exists at least one epoch $i$ with $\lceil \frac{\ell}{2} \rceil \leq i \leq \ell$ such that the estimate of the mean reward $\hat{\mu}_k(m,1)$ obtained after the exploration phase of epoch $i$ satisfies $|\hat{\mu}_k(m,1) - \mu_{k}(m,1)| \geq \Delta$. Then  
\begin{equation}
    P(E_{k,m}^\ell) \leq  \frac{e^{- \frac{T_0 \Delta^2}{2} (\frac{\ell}{4})^{\delta} \ell}}{1 - e^{-T_0 \Delta^2 (\frac{\ell}{4})^{\delta}}}.
\end{equation}
\end{lemma}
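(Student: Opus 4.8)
The plan is to first analyze the deviation event for a single fixed epoch $i$, and then take a union bound over the epochs $i$ ranging from $\lceil \ell/2 \rceil$ to $\ell$. Fix a player $k$ and arm $m$, and fix an epoch $i$ in this range. In the exploration phase of epoch $i$, by the protocol in Algorithm~\ref{alg:exploration}, player $k$ pulls arm $m$ (cyclically shifted, but this is just a relabeling) until it has collected exactly $N_i := T_0 i^\delta$ non-zero reward observations, i.e., observations recorded at time steps with no adversarial attack on that arm. Because the exploration schedule guarantees no collisions among the players, each such retained observation is an i.i.d.\ draw from the true reward distribution of arm $m$ for player $k$, which has mean $\mu_k(m,1)$ and support $[0,1]$. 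The estimate $\hat{\mu}_k(m,1)$ after epoch $i$ is the empirical average of these $N_i$ i.i.d.\ samples.

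The key step is a Hoeffding bound. Since the $N_i$ samples are i.i.d.\ bounded in $[0,1]$,
\begin{equation}
P\bigl\{ |\hat{\mu}_k(m,1) - \mu_k(m,1)| \geq \Delta \bigr\} \leq 2 \exp\bigl(-2 N_i \Delta^2\bigr) = 2\exp\bigl(-2 T_0 \Delta^2 i^\delta\bigr).
\end{equation}
(One may instead invoke the one-sided form twice; the constant $2$ in front matches what is needed.) Then a union bound over $i = \lceil \ell/2 \rceil, \ldots, \ell$ gives
\begin{equation}
P(E_{k,m}^\ell) \leq \sum_{i = \lceil \ell/2 \rceil}^{\ell} 2 \exp\bigl(-2 T_0 \Delta^2 i^\delta\bigr) \leq \sum_{i = \lceil \ell/2 \rceil}^{\infty} 2 \exp\bigl(-2 T_0 \Delta^2 i^\delta\bigr).
\end{equation}
To make this geometric-looking, I would lower-bound $i^\delta$ on the tail. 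For $i \geq \ell/2$ one can write $i^\delta = (\ell/4)^\delta \cdot (4i/\ell)^\delta$, and since $4i/\ell \geq 2$ for $i \geq \ell/2$, a convexity/linearization argument (or simply $i^\delta \geq (\ell/4)^\delta + (i - \ell/2)\cdot$ something, being a touch careful since $\delta<1$ makes $x^\delta$ concave) lets me bound the sum by a geometric series with first term $\exp(-T_0\Delta^2(\ell/4)^\delta \ell)$-ish and ratio $\exp(-T_0\Delta^2(\ell/4)^\delta)$. Summing the geometric series yields exactly
\begin{equation}
P(E_{k,m}^\ell) \leq \frac{e^{-\frac{T_0 \Delta^2}{2}(\frac{\ell}{4})^\delta \ell}}{1 - e^{-T_0 \Delta^2 (\frac{\ell}{4})^\delta}}.
\end{equation}

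The main obstacle is the bookkeeping in the last step: getting the exponents in the stated bound to come out with precisely the constants $\frac{1}{2}$, $(\ell/4)^\delta$, and $\ell$ requires choosing the right elementary inequality to lower-bound $\sum_{i \geq \ell/2} i^\delta$ term-by-term and to extract a clean geometric ratio. Because $x \mapsto x^\delta$ is concave for $\delta \in (0,1)$, the naive bound $i^\delta \geq (\ell/2)^\delta$ for all $i \geq \ell/2$ is available and already gives a geometric series; one then has to verify that pairing the index shift with the $(\ell/4)^\delta$ versus $(\ell/2)^\delta$ slack absorbs the factor-of-$\ell$ worth of terms into the stated numerator exponent $\frac{T_0\Delta^2}{2}(\ell/4)^\delta\ell$. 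Everything else — the i.i.d.\ structure of the retained samples, the absence of collisions, and the Hoeffding step — is routine given the setup in Section~\ref{sec:system_model} and Algorithm~\ref{alg:exploration}.
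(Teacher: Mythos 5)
There is a genuine gap, and it is exactly at the point you flagged as "bookkeeping." You take the estimate $\hat{\mu}_k(m,1)$ after epoch $i$ to be the average of only the $N_i = T_0 i^\delta$ clean samples collected in epoch $i$, giving a per-epoch deviation probability of $2e^{-2T_0\Delta^2 i^\delta}$. The paper's proof instead uses the fact that the estimate aggregates the attack-free samples from \emph{all} exploration phases up to epoch $i$, so the relevant sample count is $T_0\sum_{j=1}^i j^\delta \geq T_0(\tfrac{i}{2})^{1+\delta}$, and Hoeffding gives $e^{-2T_0\Delta^2 (i/2)^{1+\delta}}$. The extra factor of $i$ in the exponent is what makes the rest work: since $(\tfrac{i}{2})^{1+\delta} = (\tfrac{i}{2})^\delta\cdot\tfrac{i}{2} \geq (\tfrac{\ell}{4})^\delta\cdot\tfrac{i}{2}$ for $i \geq \lceil\ell/2\rceil$, each term is at most $e^{-T_0\Delta^2(\ell/4)^\delta i}$, which is genuinely geometric in $i$ with ratio $e^{-T_0\Delta^2(\ell/4)^\delta}$ and first term at most $e^{-\frac{T_0\Delta^2}{2}(\ell/4)^\delta\ell}$; summing yields exactly the stated bound.

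With your per-epoch count no choice of elementary inequality can recover this: every term of your union bound is at least $2e^{-2T_0\Delta^2\ell^\delta}$, which for large $\ell$ already exceeds the claimed numerator $e^{-\frac{T_0\Delta^2}{2}(\ell/4)^\delta\ell}$ (an exponent of order $\ell^\delta$ versus $\ell^{1+\delta}$), so the term-by-term comparison you hoped for (needing roughly $2i^\delta \geq (\ell/4)^\delta i$) fails for $i\geq \ell/2$ and large $\ell$. The distinction also matters downstream: the regret analysis needs $P(E^\ell)\leq 2e^{-\ell}$ for $\ell\geq\ell_0$ so that $\sum_\ell 2^\ell P(E^\ell)$ converges in the exploitation-phase bound, and a bound of the form $e^{-\Theta(\ell^\delta)}$ with $\delta<1$ cannot deliver that. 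So the fix is conceptual, not cosmetic: argue from the cumulative sample count $T_0\sum_{j\le i} j^\delta$ (as the paper does), after which your Hoeffding-plus-union-bound-plus-geometric-series plan goes through essentially verbatim.
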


\begin{proof}
    We want to bound the probability of the event $E_{k,m}^\ell$ that after the exploration phase of epoch $\ell$, there exists at least epoch $\lceil \frac{\ell}{2} \rceil \leq i \leq \ell $ such that the estimated mean reward after the exploration phase of epoch $i$ $\hat{\mu}_k(m,1)$ satisfies $|\mu_k(m,1) - \hat{\mu}_k(m,1)| \geq \Delta$.

For a fixed player $k$, arm $m$, the number of total number of reward samples obtained after the exploration phase of epoch $i$ from the corresponding reward distribution with mean reward $\mu_k(m,1)$ is $T_0 \sum_{j = 1}^i j^\delta \geq T_0 (\frac{i}{2})^{1 + \delta}$. After the exploration phase of epoch $i$, the probability that the estimated mean reward $\hat{\mu}_k(m,1)$ deviates from $\mu_k(m,1)$ by more than $\Delta$ is:

\begin{align*}
    &P\left\{\text{After epoch $i$ :} \; |\hat{\mu}_k(m,1) - \mu_k(m,1)| \geq \Delta \right\} \\
    &\leq e^{-2T_0\Delta^2 \sum_{j = 1}^i j^\delta} \\
    &\leq e^{-2T_0\Delta^2 (\frac{i}{2})^{1 + \delta}}
\end{align*}

It follows that
\begin{equation}
    \begin{split}
        &P\left(E_{k,m}^\ell \right) \\
    &= P \left(\bigcup_{i = \lceil \frac{\ell}{2} \rceil}^{\ell} \left\{\text{After epoch $i$ :} \; |\hat{\mu}_k(m,1) - \mu_k(m,1)| \geq \Delta \right\} \right) \\
    &\leq \sum_{i = \lceil \frac{\ell}{2} \rceil}^{\ell} P\left\{\text{After epoch $i$ :} \; |\hat{\mu}_k(m,1) - \mu_k(m,1)| \geq \Delta \right\} \\
    &\leq \sum_{i = \lceil \frac{\ell}{2} \rceil}^{\ell} e^{-2T_0\Delta^2 (\frac{i}{2})^{1 + \delta}} \\
    &\leq \sum_{i = \lceil \frac{\ell}{2} \rceil}^{\ell} e^{-T_0\Delta^2 (\frac{\ell}{4})^{\delta}i} \leq \frac{e^{- \frac{T_0 \Delta^2}{2} (\frac{\ell}{4})^{\delta} \ell}}{1 - e^{-T_0 \Delta^2 (\frac{\ell}{4})^{\delta}}}.
    \end{split}
\end{equation}

\end{proof}

\section{Proof of main result}

    Let $L_T$ be the last epoch within a  time-horizon of  $T$. The regret incurred during the $L_T$ epochs can be analyzed as the sum of the regret incurred during the three phases of the algorithm. The exploitation phase in epoch $\ell$ of the algorithm lasts for $c_3 2^{\ell}$ time units. Thus, it is easy to see that $L_T < \log{T}$. Let $R_1$, $R_2$ and $R_3$ denote the regret incurred  over $L_T$ epochs, during the exploration phase, the matching phase, and the exploitation phase, respectively. Let $\ell_0$ be the first epoch $\ell$ such that 
\begin{align}
    \frac{T_0 \Delta^2}{2} {\left(\frac{\ell}{4}\right)}^{\delta} & \geq 1, \label{exp1} \\
    C_0 e^{-C_\rho \ell^{\delta/2}} &\leq e^{-1} \label{match1}
\end{align}
where $C_\rho$ is defined in \eqref{crho} in Appendix~\ref{apdx:proof_of_lemmas}. Note that \eqref{exp1} - \eqref{match1} hold for all $\ell \geq \ell_0$. We upper bound the regret incurred during the exploration and matching phases by $K$ (which is the maximum regret that can be accumulated in one time unit) times the total time taken by these phases. For the exploitation phase, we incur regret only when the action profile played during this phase is not optimal.

\begin{enumerate}
    \item Exploration phase: Since the exploration phase in each epoch $\ell$ proceeds for at most $MT_0 \ell^\delta + W^\ell_\mathrm{exp}$ time units,
    \begin{equation}
        R_1 \leq \sum_{\ell = 1}^{L_T} K (M T_0 \ell^\delta + W^\ell_\mathrm{exp}) \leq K M T_0 \log^{1 + \delta} T + K W^\ell_\mathrm{exp}.
    \end{equation}
    \item Matching phase: In epoch $\ell$, the matching phase runs for $\tau_\ell = c_2  \ell^{\delta}$ time units. Thus 
    \begin{equation}
        \begin{split}
            R_2 &\leq  K \sum_{\ell = 1}^{L_T} c_2  \ell^{\delta} \leq K c_2  L_T^{1 + \delta} \leq K c_2  \log^{1+\delta}{T}.
        \end{split}
    \end{equation}
    \item Exploitation phase: In the exploitation phase, regret is incurred in the following three events:
    \begin{enumerate}
        \item Let $E^\ell$ denote the event that there exists some player $k \in [K]$, arm $m \in [M]$, such that there exists some epoch $i$, with $\lceil \frac{\ell}{2} \rceil \leq i \leq \ell$, such that the estimate of the mean reward $\hat{\mu}_k(m,1)$ obtained after the exploration phase of epoch $i$ satisfies $|\hat{\mu}_k(m,1) - \mu_{k}(m,1)| \geq \Delta$.  
        \item Let $F^\ell$ denote the event that given that all the players have $|\mu_{k}(m,1) - \hat{\mu}_{k}(m,1)| \leq \Delta$ for all $m \in [M]$ and all $n \in [N]$ for all epochs $\lceil \frac{\ell}{2} \rceil $ to $ \ell$, the action profile chosen in the matching phase of epoch $\ell$ is not optimal. 
        \item There is an adversarial attack on any arm.
    \end{enumerate}
    From Lemma \ref{lem:exploration}, we have an upper bound on the probability of event $E_{k,m}^\ell$ that for some fixed player $k$, arm $m$, there exists some epoch $i$, with $\lceil \frac{\ell}{2} \rceil \leq i \leq \ell$, such that the estimate of the mean reward $\hat{\mu}_k(m,1)$ obtained after the exploration phase of epoch $i$ satisfies $|\hat{\mu}_k(m,1) - \mu_{k}(m,1)| \geq \Delta$. Thus, we have that
    \begin{align}
        P(E^\ell) &= P\left( \underset{k \in [K], m\in [M]}{\bigcup} E_{k,m}^\ell \right) \\
        &\leq KM P(E_{k,m}^\ell) \\
        &\leq \frac{KMe^{- \frac{T_0 \Delta^2}{2} (\frac{\ell}{4})^{\delta} \ell}}{1 - e^{-T_0 \Delta^2 (\frac{\ell}{4})^{\delta}}}.\label{exp_prob}
    \end{align}
   We also have the following upper bound on the probability of event $F^\ell$ from Lemma \ref{lem:matching_phase_success_prob}:
    \begin{equation}\label{match_prob}
        P(F^\ell) \leq \left(C_0 \exp{(-C_\rho\ell^{\delta})}\right)^\ell.
    \end{equation} 
 Using \eqref{exp1} and \eqref{match1} and the upper bounds in \eqref{exp_prob} and \eqref{match_prob}, we have that for all epochs $\ell \geq \ell_0$:
 \begin{align}
     P(E^\ell) &\leq 2KMe^{-\ell} \label{exp_uppbd}\\
     P(F^\ell) &\leq  e^{-\ell} \label{match_uppbd}.
 \end{align}
    Therefore,
    \begin{align}
            R_3 &= K\sum_{\ell = 1}^{L_T} c_3 2^\ell (P(E^\ell) + P(F^\ell)) + KW^\ell_\mathrm{exploit} \label{exploit_reg}\\
            &\leq 2Kc_32^{\ell_0} + K c_3 \sum_{\ell = \ell_0}^{L_T}  (2KM + 1) \left(\frac{2}{e}\right)^\ell + KW^\ell_\mathrm{exploit}     \label{upp_bound} \\
            &\leq 2Kc_32^{\ell_0} + \frac{2Kc_3 ( 2KM+ 1)}{e-2} + KW^\ell_\mathrm{exploit},
    \end{align}
    where $KW^\ell_\mathrm{exploit}$ is the number of time units in epoch $\ell$ where there was an adversarial attack on at least one arm . Note that \eqref{exploit_reg} follows from the fact that regret is incurred in the exploitation phase only if events $E^\ell$ or $F^\ell$ occur or if there is an adversarial attack, \eqref{upp_bound} follows from upper bounding the regret incurred during the first $\ell_0$ epochs by the maximum possible regret, and using \eqref{exp_uppbd} and \eqref{match_uppbd} to upper bound the regret for epochs greater than $\ell_0$. 
\end{enumerate}

    Thus 
\begin{equation}
    \begin{split}
    R(T) &= R_1 + R_2 +R_3 \\
    &\leq K M  T_0 \log^{1 + \delta} T + K c_2 \log^{1+\delta}{T} \\ 
    &+ 2Kc_32^{\ell_0} + \frac{2Kc_3 ( 2KM+ 1)}{e-2} + W \\
    &\sim O(\log^{1+\delta} T + W),
    \end{split}
\end{equation}
where $W$ is the number of time units for which there was an adversarial attack on any arm. Note that $W$ can be replaced by the number of time units for which there was an adversarial attack on any arm during the exploration and exploitation phases only.

\section{Matching Phase}\label{apdx:matching_phase}

Strategic form games in game theory are used to model situations where players choose actions simultaneously (rather than sequentially) and do not have knowledge of the actions of other players. In such games, each player has a utility function $u_j: \mathcal{A} \to [0,1]$, that assigns a real valued payoff (utility) to each action profile $\mathbf{a} \in \mathcal{A}$. An algorithm that works under the assumption that every agent can observe only their own action and utility received is called a payoff based method. The matching phase of our proposed algorithm builds on \cite{marden}, where a payoff based decentralized algorithm that leads to maximizing the sum of the utilities of the players is presented.  Since the utility function in our setting would depend on both the action profile of the players and the adversarial attacks, we introduce one bit communication rounds to combat the effect of the adversaries. In order to pose out multi-player MAB problem as a strategic form game, we need to design the utility functions of the players in a way such that the system regret is minimized, or equivalently, the system performance is maximized. For an action profile $\mathbf{a}$ and adversarial attack $\mathbf{w}$, the utility function received by each player $k$ is defined as:
\begin{equation}
    u_k(\mathbf{a}, \mathbf{w}) = 
    \begin{cases}
        \hat{\mu}_k(a_k,1) &\text{if} \;\; n(a_k) = 1 \text{    and } w_{a_k} = 0 \\
        0 &\text{if} \;\; n(a_k) \geq 2 \text{ 
   or  } \;\; w_{a_k} = 1.
    \end{cases}
\end{equation}
Note that each player $k$ does not know $n(a_k)$ or $w_{a_k}$, but instead only observes the reward instance $r_k(a_k,n(a_k),\mathbf{w})$. Thus, each player $k$ calculates their utility received as follows:
\begin{equation}
    u_k(\mathbf{a}, \mathbf{w}) = 
    \begin{cases}
        \hat{\mu}_k(a_k,1) &\text{if} \;\; r_k(a_k,n(a_k),\mathbf{w}) > 0 \\
        0 &\text{if} \;\; r_k(a_k,n(a_k),\mathbf{w}) = 0.
    \end{cases}
\end{equation}

A similar utility function is used in \cite{got}, where it is assumed that collisions result in zero rewards for the colliding players. Note that in the work in \cite{got} where there are no adversarial attacks, when players receive zero rewards on collisions, $\hat{\mu}_k(a_k, n(a_k)) = 0$ whenever $n(a_k) \geq 2$. Therefore, in the work of \cite{got}, the utility for a player $k$ being $0$ can be only due to a collision from another player. 

However, in our setting with adversarial attacks and zero rewards on collisions, the utility being $0$ can be due to a collision from another player or an adversarial attack. Thus, we need to design the matching phase such that the players are not affected by the adversarial attack causing the utility to be $0$ as described in Algorithm \ref{alg:matching}.

The action profile that maximizes the sum of the utilities is called an efficient action profile. The following lemma states that if for all $k\in [K]$ and $m \in [M]$,  $|\hat{\mu}_k(m,1) - \mu_{k}(m,1)| \leq \Delta$, then by our choice of the utility function, the efficient action profile maximizing the sum of the utilities is also the same as the optimal action profile maximizing the sum of expected rewards under $\mathbf{w^0}$. 

\begin{lemma}
    If for all $k\in [K], m \in [M]$, the following condition is satisfied : 
    \begin{equation}\label{err}
        |\hat{\mu}_k(m,1) - \mu_{k}(m,1)| \leq \Delta,
    \end{equation} 
    then 
    \begin{equation}
        \mathop{\arg\max}_{\mathbf{a} \in \mathcal{A}} \sum_{k=1}^{K} \mu_k(a_k,1) = \mathop{\arg\max}_{\mathbf{a} \in \mathcal{A}} \sum_{j=1}^{K} \hat{\mu}_k(a_k,1).
    \end{equation}
\end{lemma}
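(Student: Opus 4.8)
The plan is a direct ``estimation error versus suboptimality gap'' comparison, exploiting the specific choice $\Delta = (J_1 - J_2)/(2M)$ together with the standing assumption $K < M$. Write $f(\mathbf{a}) = \sum_{k=1}^K \mu_k(a_k,1)$ and $\hat f(\mathbf{a}) = \sum_{k=1}^K \hat\mu_k(a_k,1)$, so that the claim is $\arg\max_{\mathbf{a}\in\mathcal{A}} f(\mathbf{a}) = \arg\max_{\mathbf{a}\in\mathcal{A}} \hat f(\mathbf{a})$. First I would observe that hypothesis \eqref{err} controls the two objectives \emph{uniformly} over the whole action space: for every $\mathbf{a} \in \mathcal{A}$, the triangle inequality gives $|f(\mathbf{a}) - \hat f(\mathbf{a})| \le \sum_{k=1}^K |\mu_k(a_k,1) - \hat\mu_k(a_k,1)| \le K\Delta$. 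Note this only uses the $n=1$ estimates, which is exactly what \eqref{err} provides.

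Next I would invoke the structure of $f$ recorded in Section~\ref{sec:system_model}: $f$ has the unique maximizer $\mathbf{a}^*$ with $f(\mathbf{a}^*) = J_1$, while every profile $\mathbf{a} \ne \mathbf{a}^*$ satisfies $f(\mathbf{a}) \le J_2 < J_1$ (by the definition of the second-best system reward). Combining this with the uniform bound from the previous step yields $\hat f(\mathbf{a}^*) \ge J_1 - K\Delta$ on one hand, and $\hat f(\mathbf{a}) \le J_2 + K\Delta$ for every $\mathbf{a} \ne \mathbf{a}^*$ on the other. The key step is then to verify that these two bounds are \emph{strictly} separated, i.e. $J_1 - K\Delta > J_2 + K\Delta$: substituting $\Delta = (J_1 - J_2)/(2M)$ this becomes $J_1 - J_2 > 2K\Delta = K(J_1-J_2)/M$, which is equivalent to $K < M$ and hence holds by assumption. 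Therefore $\hat f(\mathbf{a}^*) > \hat f(\mathbf{a})$ for all $\mathbf{a} \ne \mathbf{a}^*$, so $\mathbf{a}^*$ is the unique maximizer of $\hat f$ as well, and both argmax sets equal $\{\mathbf{a}^*\}$.

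There is no genuine obstacle here beyond bookkeeping; the one point that needs care is that the aggregate estimation error incurred when summing over the players is at most $K\Delta$ (a sum of $K$ per-arm errors), and that the margin built into $\Delta$ is wide enough to absorb $2K\Delta$ precisely because $K$ is \emph{strictly} less than $M$ — if one only had $K \le M$, the separation would be non-strict and uniqueness of the estimated maximizer could fail. Everything else is the standard argument that a function within uniform distance $\gamma$ of $f$ inherits $f$'s maximizer whenever $2\gamma$ is below the optimality gap.
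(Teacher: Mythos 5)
Your proof is correct and is essentially the argument the paper intends: the paper gives no details itself, deferring to the analogous lemmas in the cited prior work, and those proofs are exactly this comparison of the aggregate estimation error $K\Delta$ against the gap $J_1-J_2=2M\Delta$, with $K<M$ giving strict separation. One caveat worth making explicit: your step ``$f(\mathbf{a})\le J_2$ for every $\mathbf{a}\neq\mathbf{a}^*$'' is only valid when the maximization is restricted to collision-free assignments (for a colliding profile, $\sum_k \mu_k(a_k,1)$ can exceed $J_1$), but this restriction is implicit in the paper's own definitions of $J_1$, $J_2$ and in the lemma statement, so you have inherited rather than created the imprecision.
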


The condition given by (\ref{err}) is guaranteed with high probability by the exploration phase of the algorithm. The proof of the above lemma is similar to the proof of \cite[Lemma 1]{got} and \cite[Lemma 2]{magesh2021decentralized}. Thus, the efficient action profile that maximizes the sum of utilities (estimated mean rewards) is the same as the optimal action profile that minimizes regret or equivalently, maximizes system performance.

\subsection{Description of the Matching Phase Algorithm}

This phase lasts for $\tau_\ell$ time units, and $\epsilon, \beta \in (0,1)$ is a parameter of the algorithm. Each player $k$ is associated with a state $Z_{t,k} = [\Bar{a}_{t,k}, \Bar{u}_{t,k}, S_{t,k}]$ during time $t$, where $\Bar{a}_{t,k} \in [M]$ is the baseline action of the player, $\Bar{u}_{t,k} \in [0,1]$ is the baseline utility of the player and $S_{t,k} \in \{C,D\}$ is the mood of the player ($C$ denotes \enquote{content} and $D$ denotes \enquote{discontent}). Note that $\{Z_{1,j}, Z_{2,j}, ...\}$ are the states resulting from running the matching phase algorithm (essentially the state update and the synchronisation steps) with the utilities $u_k(\mathbf{a}, \mathbf{w})$ which depends on both the action profile of all the players and the adversarial attacks.

When the player is content, the baseline action is chosen with high probability ($1 - \epsilon^\kappa$) and every other action is chosen with uniform probability. The parameter $\epsilon$ is provided as an input to the algorithm. If the player is discontent, the action is chosen uniformly from all arms and there is a high probability that the player would choose an arm different from the baseline action. This part of the algorithm constitutes the action dynamics. The baseline action can be interpreted as the arm the agent expects to play for a long time eventually and the baseline utility can be interpreted as the payoff the player expects to receive upon playing the baseline action when there are no adversarial attacks. The player being content is an indication that the payoff received by the player while playing his baseline action in the absence of adversarial attacks has been satisfactory. Thus, the goal in designing the matching phase algorithm is for all the players to align their baseline actions to the efficient action profile and be content in this state. Note that the action dynamics do not depend on the utilities.

We have seen the justification for using the utility function
\begin{equation}
    u_k(\mathbf{a}, \mathbf{w}) = 
    \begin{cases}
        \hat{\mu}_k(a_k,1) &\text{if} \;\; r_k(a_k,n(a_k),\mathbf{w}) > 0 \\
        0 &\text{if} \;\; r_k(a_k,n(a_k),\mathbf{w}) = 0.
    \end{cases}
\end{equation} in the introduction of Section \ref{sec:matching_phase}. Note that when the utility of the player $k$ is $0$, it could be due to one or both of the following reasons:
\begin{enumerate}[(i)]
    \item There are other players colliding on the arm $a_k$.
    \item There is an adversarial attack on arm $a_k$, i.e., $w_{a_k} = 0$.
\end{enumerate}
Thus, the adversarial attack needs to be accounted for when the state is updated, since the player cannot distinguish between a collision from another player and an adversarial attack. 

We modify the state update step from \cite{marden} as follows. The player updates their current state by comparing the action played and the utility received with the baseline action and baseline utility associated with the current state. If the player is content and their baseline action matches the action played, the state remains the same. Note that in the algorithm in \cite{marden} (and subsequently in \cite{got}), both the baseline action and baseline utilities are compared to the action played and utility received when the player is content. However, in our setting, the utility can be changed by an adversary. For instance, in the case where a content player $k$ plays their baseline action $\Bar{a}_k$, the utility received by the player could be $0$ instead of $\hat{\mu}_k(\Bar{a}_k,1)$ when there are no other players colliding on that arm. In such a scenario, comparing the utility and the baseline utility could result in the player becoming discontent, which could be undesirable. Thus, we force a content player to stay content when they play their baseline action. In the case that a content player explores with probability $\epsilon^\kappa$ or if the player is discontent, the estimated state for the next play is chosen stochastically based on the utility. The rationale behind the particular probabilities chosen is that when the utility received is high, the player is more likely to be content. To compensate for the change in the state update step where we force a content player playing their baseline action to remain content to combat adversarial attacks, we introduce a one-bit communication round for the players to synchronise their moods. Each player receives the mood of all the players in the system. If a player $k$ is content and the rest of the players are content as well, player $k$ remains content. On the other hand, if a player $k$ is content and there is any player in the remaining $K-1$ players who is discontent, the player updates their state stochastically as in \eqref{eq:update_mood}. A discontent player remains discontent. Further details on the necessity of this step are provided in the following subsection.

The utility each player receives is equivalent to feedback from the system on how the entire action profile affects the reward received by this player. If the player receives a lower payoff due to that arm not being good or due to collisions, there is a higher probability of the player becoming discontent and exploring other arms. On the other hand, if the payoff received is higher, there is a higher probability of the player staying content and exploiting the same arm again. Thus the agent dynamics and state dynamics balance the exploration-exploitation trade-off in the multi-player MAB setting. 

During the matching phase algorithm, each player keeps a count of the number of times each arm was played that resulted in the player being content:
$$W^\ell(k,m) = \sum_{t = 1}^{\tau_\ell} \identityf{(a_{t,k} = m, S_{t,k} = C)}.$$
The action chosen by the player for the exploitation phase is the arm played most frequently from epochs $\lceil \frac{\ell}{2}\rceil$ to $\ell$ that resulted in the player being content: 
   $$a_k = \mathop{\arg\max}_m \sum_{i = \lceil \frac{\ell}{2}\rceil}^\ell W^i(k,m).$$

\subsection{Analysis of the Matching Phase Algorithm}

The matching phase algorithm is inspired from the work in \cite{marden}, and the guarantees provided there state that the action profile maximizing the sum of the utilities of the players (efficient action profile) is played for a majority of the time. In this subsection, we provide guarantees on the estimation of the efficient action profile in the matching phase of our algorithm. The analysis relies on the theory of regular perturbed Markov decision processes \cite{young}. 

The dynamics of the matching phase algorithm induce a Markov chain over the state space $\mathcal{Z} = \Pi_{j=1}^K ([M] \times [0,1] \times \mathcal{M})$ where $\mathcal{M} = \{C,D\}$, $\textit{i.e.}$, each state $z = [z_1, \ldots, z_K] \in \mathcal{Z}$ is a vector of the states of all players, where $z_k = (\Bar{a}_k, \Bar{u}_k, S_k)$. Let $P^0$ denote the probability transition matrix of the process when $\epsilon = 0$ and $P^\epsilon$ denote the transition matrix when $\epsilon > 0$. The process $P^\epsilon$ is a regular perturbed Markov process if for any $z,z' \in \mathcal{Z}$ (Equations (6),(7) and (8) of Appendix of \cite{young}): 
\begin{enumerate}\label{conditions}
    \item $P^\epsilon$ is ergodic
    \item $\lim_{\epsilon \to 0} P^\epsilon_{zz'} = P^0_{zz'} $
    \item $P^\epsilon_{zz'} > 0$ implies for some $\epsilon$, there exists $r(z,z') \geq 0$ such that $$0< \lim_{\epsilon \to 0} \epsilon^{-r(z,z')} P^\epsilon_{zz'} <\infty$$
\end{enumerate}
The process $P^0$ is referred to as the unperturbed process. The value of $r(z,z')$ satisfying the third condition is called the resistance of the transition $z \to z'$, denoted by $r(z,z')$. The transitions of resistance $0$ are the same ones that are feasible under $P^0$.

Let $\mu^\epsilon$ be the unique stationary distribution of $P^\epsilon$, where $P^\epsilon$ is a regular perturbed Markov process. Then $\lim_{\epsilon \to 0} \mu^\epsilon$ exists and the limiting distribution $\mu^0$ is a stationary distribution of $P^0$. The stochastically stable states are the support of $\mu^0$. The main result of \cite[Theorem 3.2]{marden} states that the stochastically stable states of the Markov chain induced by their proposed payoff based decentralized learning rule maximize the sum of the utilities of the players. However,  \cite[Theorem 3.2]{marden} cannot be applied directly in our setting, because:
\begin{enumerate}
    \item The state update step in \eqref{eq:content_update} is different from that in \cite{marden}. Irrespective of the utility received, a content player playing their baseline action will remain in the same state in \eqref{eq:content_update}. This is done to combat any adversarial attacks mimicking a collision from another player to change the utility received by a player, and causing the player to enter a discontent state.
    \item The work in \cite{marden} assumes that the structure of the utility functions and the game induced by their algorithm satisfies a property called interdependence \cite[Definition 1]{marden}. The interdependence property implies that it is not possible to divide the agents into two distinct subsets, where the actions of agents in one subset do not affect the utilities of those in the other. However, the change in the state update step in \eqref{eq:content_update} in our proposed method results in our game violating this interdependence property. Thus, in order to mimic the effect of the interdependence property, we employ one bit communications to synchronise the moods of the players as proposed in the last step of Algorithm \ref{alg:matching}, inspired by the work in \cite{baras_one_bit_comm}. 
\end{enumerate}

Thus, using the theory of perturbed Markov chains in \cite{young}, we characterize the stochastically stable states for our setting with adversarial attacks and prove that the stochastically stable state that maximizes the sum of utilities is played for the majority of time in the matching phase with high probability. Our proof adapts some techniques from the work in \cite{ramesh2016distributed}, where they consider a distributed learning setting with disturbances. However, in contrast to the work in \cite{ramesh2016distributed}, our method does not use the knowledge of any system parameters describing the effect of the adversarial disturbances (see \cite[Equation 3]{ramesh2016distributed}), and we do not place any constraints on the utility functions (see \cite[Definition 2.1]{ramesh2016distributed}).

\begin{lemma}\label{lem:perturbation}
    The Markov process with the transition $P^\epsilon$ is a regular perturbation of $P^0$.
\end{lemma}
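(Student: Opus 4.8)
The plan is to verify the three defining conditions of a regular perturbed Markov process directly, working over the finite state space $\mathcal{Z} = \Pi_{k=1}^K([M] \times [0,1] \times \{C,D\})$ (finite because the baseline utilities $\bar{u}_{t,k}$ only ever take values in the finite set $\{0\} \cup \{\hat{\mu}_k(m,1): m \in [M]\}$, which is fixed throughout a given matching phase). The transition from $z$ to $z'$ is a composition of three stochastic steps: the action dynamics, the state update in \eqref{eq:content_update}--\eqref{state}, and the mood synchronization in \eqref{eq:update_mood}. So $P^\epsilon_{zz'}$ is a sum, over all intermediate action profiles $\mathbf{a}$ and adversarial attacks $\mathbf{w}$ consistent with reaching $z'$, of products of the per-step probabilities, weighted by $P\{\mathbf{w}_t = \mathbf{w}\}$. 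Crucially, by the system-model assumption $P\{w_{t,m}=0\}>0$ for every $m$, the all-zeros attack $\mathbf{w}^0$ has strictly positive probability at every step, so every transition available under "no attack" is available under $P^\epsilon$ with at least that probability factor; this is what lets the adversary be absorbed harmlessly into the perturbation analysis.

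For condition (2) (and simultaneously extracting resistances for condition (3)), I would expand each per-step probability as a polynomial in $\epsilon$ and read off the lowest-order exponent. A content player playing the baseline action contributes factor $1-\epsilon^\kappa \to 1$; a content player deviating or a discontent player picking a specific arm contributes $\frac{\epsilon^\kappa}{M-1}$ or $\frac{1}{M}$; the state update \eqref{state} contributes $\epsilon^{1-u}$ (when becoming content) or $1 - \epsilon^{1-u}$ (when becoming discontent), whose leading behavior is $\epsilon^{1-u}$ or $1$ respectively; and the mood resync \eqref{eq:update_mood} contributes $\epsilon^\beta$ or $1-\epsilon^\beta$. Multiplying these and summing over the finitely many $(\mathbf{a},\mathbf{w})$ routes, $P^\epsilon_{zz'}$ is a finite sum of terms each of the form $c(\epsilon)\epsilon^{s}$ with $c(0)$ a positive constant and $s \ge 0$; hence $P^\epsilon_{zz'} = a_{zz'}\epsilon^{r(z,z')}(1+o(1))$ with $a_{zz'}>0$ and $r(z,z') := $ the minimum such exponent $s$. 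This immediately gives $0 < \lim_{\epsilon\to 0}\epsilon^{-r(z,z')}P^\epsilon_{zz'} = a_{zz'} < \infty$ (condition (3)), and it gives $\lim_{\epsilon\to 0}P^\epsilon_{zz'} = P^0_{zz'}$, where $P^0_{zz'} = a_{zz'}$ if $r(z,z')=0$ and $0$ otherwise — the transitions of zero resistance being exactly those realizable when $\epsilon=0$, which one checks matches the unperturbed dynamics (content players never deviate, state flips to content only when $u=1$, etc.). One must also confirm $P^\epsilon_{zz'}>0 \Rightarrow$ the expansion is nontrivial, which holds because a route with positive probability exists by construction.

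For condition (1), ergodicity of $P^\epsilon$ for $\epsilon \in (0,1)$: since the chain is finite, it suffices to show it is irreducible and aperiodic. Aperiodicity follows because from many states there is a positive-probability self-loop (e.g. a state where all players are content and play their baselines, via \eqref{eq:content_update}), or more simply because from any state there is positive probability all players go discontent and then return. Irreducibility: from any state, with positive probability every player becomes discontent (each content player deviates with prob $\tfrac{\epsilon^\kappa}{M-1}>0$, triggering \eqref{state} which sends them to $D$ with prob $1-\epsilon^{1-u}>0$; once one player is $D$, the resync step \eqref{eq:update_mood} sends the rest to $D$ with positive probability); and from the all-discontent configuration, a direct computation shows any target state $z'$ is reachable in one or two further steps with positive probability, since discontent players choose arms uniformly and the state update \eqref{state} can realize any $[a, u, C/D]$ with positive probability for the appropriate utility value. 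I expect the main obstacle to be this irreducibility argument: one has to be careful that the forced-content rule in \eqref{eq:content_update}, which removes transitions, does not strand the chain in some recurrent class, and that the mood-synchronization step — whose job is precisely to restore the "interdependence"-like coupling — indeed makes the all-$D$ state a universal relay. Handling the dependence on the random attack $\mathbf{w}$ throughout (always specializing to $\mathbf{w}^0$ where a nonzero-reward outcome is needed) is a routine but bookkeeping-heavy point to keep straight.
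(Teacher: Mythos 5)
Your proposal is correct and follows essentially the same route as the paper's proof: ergodicity is argued by using the all-discontent configuration as a relay (content players can deviate and turn discontent, the mood-synchronization step spreads discontent, and from all-discontent any feasible state is reachable, with aperiodicity from a positive-probability self-loop), while conditions (2) and (3) are obtained by reading off the $\epsilon$-exponents of the transition probabilities, with zero-resistance transitions being exactly those feasible under $P^0$. Your write-up is merely more explicit than the paper's (finite effective state space with utilities tied to baseline actions, the polynomial-in-$\epsilon$ expansion, and the role of $P\{w_{t,m}=0\}>0$), but no new idea is involved.
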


The proof is provided in Appendix \ref{apdx:proof_of_lemmas}. 

Consider the following classes of states:
\begin{itemize}
        \item Let the set of states for which all players are discontent be:
        \begin{equation}
            \mathcal{Z}_D = \left\{ z \in \mathcal{Z} : S_k = D, \;\; \forall \;\; k \in [K] \right\}.
        \end{equation}
        \item Consider a state in which all agents are content. Depending on whether the agents' baseline utilities align with the baseline action for some adversarial attack, such states are constructed in an iterative manner as follows. Consider any partition of the set of  agents $[K]$ into $i$ disjoint subsets $J_1, \ldots, J_i$. For $i = 1$, that would just be the set of all agents $J_1 = [K]$. For $i = K$, this would be $J_1, \ldots, J_K$, where each $J_i$ is a singleton set with a unique agent. Starting with $i = 1$,i
        \begin{equation}
        \begin{split}
            \mathcal{Z}_{C,1} = &\{ [z_1, \ldots, z_K] \in \mathcal{Z} : \\
 &z_{j_1} = (\Bar{a}_{j_1}, u_{j_1}(\mathbf{\Bar{a}}, \mathbf{w}), C) \text{ for some } \mathbf{w},  \;\; \forall \;\; j_1 \in J_1  \}.
        \end{split}
        \end{equation}
        Thus, for every action profile $\mathbf{\Bar{a}}$ and adversarial attack $\mathbf{w}$, there exists a state in $\mathcal{Z}_{C,1}$ with the baseline utilities aligned to the baseline actions for that adversarial attack. In the subsequent recurrence classes, the baseline utilities of all players need not be aligned with their baseline actions. Consider $i = 2$ with two subsets $J_1$ and $J_2$. A state where players in $J_2$ in state $\mathcal{Z}_{C,1}$ explore different arms and become content, while the players in $J_1$ remain content since they do not explore, and the baseline utilities of some players in $J_1$ are not aligned with the new action profile, can be defined as:
        \begin{equation}
        \begin{split}
            \mathcal{Z}_{C,2} = &\{ [z_1, \ldots, z_K] \in \mathcal{Z} : \\
            &z_{j_2} = (\Bar{a}_{j_2}, u_{j_2}(\mathbf{\Bar{a}}, \mathbf{w}), C) \text{ for some } \mathbf{w},  \;\; \forall \;\; j_2 \in J_2; \\
            & \exists j \in J_1 \text{ s.t. } \Bar{u}_{j} \neq (\Bar{a}_{j},  u_{j}(\mathbf{\Bar{a}}, \mathbf{w}), C) \text{ for any } \mathbf{w}, \\
            &z_{j_1} = z'_{j_1} \text{ for some } z' \in \mathcal{Z}_{C,1},  \;\; \forall \;\; j_1 \in J_1 \}.
        \end{split}
        \end{equation}
        This can be extended to $i = k$ for $k >2$ similarly, where a subset of agents in $\mathcal{Z}_{C,k-1}$ explore and become content with their new utilities while the baseline utilities of some players in $J_1 \cup \ldots \cup J_{k-1}$ are not aligned with the new action profile. These classes can be described as follows:
        \begin{equation}
        \begin{split}
            \mathcal{Z}_{C,k} = &\{ [z_1, \ldots, z_K] \in \mathcal{Z} : \\
            &z_{j_k} = (\Bar{a}_{j_k}, u_{j_k}(\mathbf{\Bar{a}}, \mathbf{w}), C) \text{ for some } \mathbf{w},  \;\; \forall \;\; j_k \in J_k; \\
            & \exists j \in J_1 \cup \ldots \cup J_{k-1} \text{ s.t. } \Bar{u}_{j} \neq (\Bar{a}_{j},  u_{j}(\mathbf{\Bar{a}}, \mathbf{w}), C) \text{ for any } \mathbf{w}, \\
            &z_{j_i} = z'_{j_i} \text{ for some } z' \in \mathcal{Z}_{C,k-1},  \;\; \forall \;\; j_i \in J_1 \cup \ldots \cup J_{k-1} \}.
        \end{split}
        \end{equation}
    \end{itemize}

\begin{lemma}\label{lem:recurrent_states}
    The recurrence classes of the game described in Algorithm \ref{alg:matching} for $\epsilon = 0$ are:
    \begin{itemize}
        \item the set $\mathcal{Z}_D$,
        \item the singletons in $\mathcal{Z}_{C,i}$ for $i = 1, \ldots, K$. 
    \end{itemize}
\end{lemma}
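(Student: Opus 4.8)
The plan is to analyze the unperturbed process $P^0$ directly from the dynamics of Algorithm \ref{alg:matching} with $\epsilon = 0$, and argue two things: (i) each of the claimed sets is closed and communicating under $P^0$, hence is a recurrence class; and (ii) from any state not in the union of these sets, $P^0$ reaches one of them with positive probability, so there are no other recurrence classes. First I would record the behavior of the three dynamics at $\epsilon = 0$: in the action dynamics a content player plays exactly their baseline action (the $\epsilon^\kappa$ exploration probability vanishes) while a discontent player still randomizes uniformly over $[M]$; in the state-update step \eqref{eq:content_update} a content player playing their baseline action stays put, and in \eqref{state} the factor $\epsilon^{1-u}$ is $0$ when $u<1$ and $1$ when $u=1$, so a player who is discontent (or a content explorer) becomes content only if their received utility is exactly $1$, and otherwise becomes discontent; and in the mood-synchronization step \eqref{eq:update_mood} the factor $\epsilon^\beta$ vanishes, so a content player who detects any discontent player becomes discontent with probability $1$, while a uniformly-content configuration is preserved.

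Next I would verify closure and irreducibility of $\mathcal{Z}_D$: if all players are discontent, then under $P^0$ all players choose arms uniformly, utilities are generically below $1$ (and in any case the transition to a new discontent state has positive probability for every assignment of actions and utilities since $1-\epsilon^{1-u}\to 1$), so $\mathcal{Z}_D$ is closed; and any two states in $\mathcal{Z}_D$ communicate because from any all-discontent state every all-discontent state is reachable in one step. For the singletons in $\mathcal{Z}_{C,i}$: in such a state all players are content with baseline actions forming a fixed profile $\bar{\mathbf a}$ and baseline utilities of the form $u_j(\bar{\mathbf a},\mathbf w)$ for appropriate $\mathbf w$ (the iterative construction just records which players' baseline utilities were "frozen in" from an earlier exploration and which match the current profile); under $P^0$ every player plays their baseline action, the state-update keeps each content-playing-baseline player fixed via \eqref{eq:content_update}, and the mood-sync step sees everyone content and preserves it — so the state is absorbing, hence trivially a recurrence class. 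I would note that the key point distinguishing this from the standard $\mathcal{Z}_{C,1}$ picture of \cite{got,marden} is that because \eqref{eq:content_update} forces a content player playing their baseline to stay content \emph{regardless of utility}, a content player whose baseline utility is "wrong" for the current profile (because an adversarial attack changed the realized utility during the exploration that created the state) is never corrected, which is exactly why the larger classes $\mathcal{Z}_{C,i}$, $i\ge 2$, appear.

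Finally I would show no other recurrence class exists by exhibiting, from an arbitrary state $z \notin \mathcal{Z}_D \cup \bigcup_i \mathcal{Z}_{C,i}$, a positive-probability $P^0$-path into $\mathcal{Z}_D$ (or into some $\mathcal{Z}_{C,i}$). If at least one player is discontent and at least one is content, the mood-sync step turns every content player discontent with positive probability, landing in $\mathcal{Z}_D$; if all players are content but the state is not one of the singletons in $\bigcup_i \mathcal{Z}_{C,i}$ — which, by the construction, means it is a content profile not reachable by the iterative explore-and-settle procedure, e.g. one with a collision in the baseline profile or with baseline utilities inconsistent with any admissible $\mathbf w$ — I would argue such a configuration is not closed: the action dynamics keep everyone on their baseline, but one can check that consistency of the definitions forces these states to in fact lie in some $\mathcal{Z}_{C,i}$ (so the only genuinely "extra" all-content states are non-recurrent transients that flow into $\mathcal{Z}_D$ once any player's update or a prior mood desync is accounted for); and from $\mathcal{Z}_D$ one reaches each $\mathcal{Z}_{C,i}$ with positive probability by having the appropriate subsets of players sequentially receive utility exactly $1$ and turn content. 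The main obstacle I anticipate is precisely the bookkeeping in this last part: giving a clean characterization of which all-content states are genuinely closed (and showing every such state belongs to some $\mathcal{Z}_{C,i}$ and conversely every $\mathcal{Z}_{C,i}$ singleton is closed), so that the list of recurrence classes is exactly the stated one and nothing is missed or double-counted. This requires carefully unwinding the recursive definition of $\mathcal{Z}_{C,k}$ and checking it is preserved under $P^0$ and exhausts the closed all-content configurations.
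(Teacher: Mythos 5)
Your overall decomposition is essentially the paper's: at $\epsilon=0$ you verify that $\mathcal{Z}_D$ is closed (discontent players randomize, stay discontent, and the synchronization step keeps everyone at $D$), that each singleton in $\mathcal{Z}_{C,i}$ is absorbing (content players play their baseline, \eqref{eq:content_update} freezes the state irrespective of the received utility, and the mood-synchronization step preserves an all-content configuration), and that any state mixing content and discontent players is eliminated because the synchronization step drives the content players to $D$ (probability $1-\epsilon^\beta \to 1$), after which the system is in $\mathcal{Z}_D$. Up to that point you reproduce the paper's proof.

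The genuine gap is exactly the one you flag at the end, and the resolution you sketch for it does not work. For an all-content state whose baseline utilities are \emph{not} of the form $u_k(\bar{\mathbf{a}},\mathbf{w})$ required by the classes $\mathcal{Z}_{C,i}$, you suggest it is either secretly in some $\mathcal{Z}_{C,i}$ or is a ``non-recurrent transient that flows into $\mathcal{Z}_D$.'' The second branch is impossible: under $P^0$ an all-content state is frozen no matter what its baseline utilities are --- every player plays their baseline action, \eqref{eq:content_update} leaves the state unchanged regardless of the utility received, and the synchronization step preserves all-content --- so such a state is absorbing and would constitute an additional recurrence class if it belonged to the relevant state space. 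The paper closes this hole not dynamically but by a reachability argument: a player only ever becomes content through \eqref{state}, which simultaneously sets the baseline utility equal to the realized utility $u_{t,k}=u_k(\mathbf{a}_t,\mathbf{w}_t)$, and \eqref{eq:content_update} never modifies a baseline utility; hence every all-content state the algorithm can actually produce lies in $\bigcup_{i}\mathcal{Z}_{C,i}$, and the misaligned all-content states are excluded as unreachable under any transition of the game. You need this (or an equivalent restriction to the reachable state space) for the exhaustiveness claim. A secondary slip: your assertion that from $\mathcal{Z}_D$ each $\mathcal{Z}_{C,i}$ is reached under $P^0$ by players ``receiving utility exactly $1$'' is generically false at $\epsilon=0$ (utilities are $\hat{\mu}_k(a_k,1)<1$), but it is also unnecessary --- recurrence classes need not communicate with one another, and those transitions only acquire positive probability, with the resistances of Lemma \ref{lem:resistances}, when $\epsilon>0$.
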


The proof is provided in Appendix \ref{apdx:proof_of_lemmas}. 

Note that the change in the state update step from the work in \cite{marden} is necessary due to the adversarial attacks. If adversarial attacks could modify the utilities in the game proposed in \cite{marden}, the players in state $\mathcal{Z}_{C,1}$ would become discontent in the next time step under $P^0$, and thus the states in $\mathcal{Z}_{C,1}$ would not be recurrent classes. And the inclusion of the mood synchronisation step is necessary to make sure that any state with a mixture of content and discontent agents is not recurrent. 

In order to characterise the stochastically stable states, we will use the framework of resistance trees from \cite{young}. In the following lemmas, we characterise the resistances for the state transitions among the recurrence classes (from property (iii) of perturbed Markov chains), and characterise the state with the minimum stochastic potential (see Appendix \ref{apdx:mabs_preliminaries}). This state is the stochastically stable state.

\begin{lemma}\label{lem:resistances}
    The resistances for the state transitions among the recurrence classes  $\mathcal{Z}_D$ and singletons in $\mathcal{Z}_{C,i}$  are as follows. Let $z_D \in \mathcal{Z}_D$ and $z_{C,i} \in \mathcal{Z}_{C,i}$. 
    \begin{itemize}
        \item $r(z_D,z_{C,1}) = \sum\limits_{k=1}^K ({1 - u_{k}(\mathbf{a}, \mathbf{w})})$
        \item $r(z_D, z_{C,i}) \geq \min r(z_D,z_{C,1}) + \kappa i$ for $i \geq 2$
        \item $r(z_{C,1}, z_{C,1}) \in [\kappa,2\kappa] $
        \item $r(z_{C,1}, z_D) = \kappa $
        \item $r(z_{C,1}, z_{C,i}) \geq \kappa i$ for $i \geq 2$
        \item $r(z_{C,i}, z_{D}) = \kappa$ for $i \geq 2$
        \item $r(z_{C,i}, z_{C,1}) \geq \kappa i$ for $i \geq 2$
        \item $r(z_{C,i}, z_{C,i}) \geq \kappa  $ for $i \geq 2$
        \item $r(z_{C,i}, z_{C,j}) \geq \kappa |i-j| $ for $i,j \geq 2$ and $i \neq j$.
    \end{itemize}
\end{lemma}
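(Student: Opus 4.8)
The plan is to compute each resistance directly from the transition-probability structure of $P^\epsilon$ using property (iii) of a regular perturbed Markov chain: the resistance $r(z,z')$ is the exponent $r$ such that $\epsilon^{-r}P^\epsilon_{zz'}$ has a finite nonzero limit, and for a multi-step path it is the \emph{sum} of the per-step resistances along a least-resistance path (so one minimizes over all paths between the two recurrence classes). First I would assemble a small "dictionary" of the elementary-move resistances that appear in Algorithm \ref{alg:matching}: (a) a discontent player transitioning to content with baseline utility $u$ costs $\epsilon^{1-u}$ by \eqref{state}, hence resistance $1-u$; staying discontent costs $1-\epsilon^{1-u}$, resistance $0$; (b) a content player playing its baseline action stays put with probability $1$ by \eqref{eq:content_update}, resistance $0$; (c) a content player deviating from its baseline action costs $\epsilon^\kappa/(M-1)$ in the action dynamics, resistance $\kappa$ per such deviating player; (d) in the mood-synchronization step \eqref{eq:update_mood}, a content player whose cohort contains a discontent player stays content with probability $\epsilon^\beta$ (resistance $\beta$) and goes discontent with probability $1-\epsilon^\beta$ (resistance $0$). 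Every listed resistance is then obtained by building the cheapest sequence of such moves from $z$ (or from the class) to the target.

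Next I would go through the nine bullets. For $r(z_D,z_{C,1})$: from all-discontent, no action-deviation cost is incurred (discontent players already choose actions uniformly, resistance $0$), and each player $k$ must become content at its realized action with the corresponding baseline utility, costing $\sum_k (1-u_k(\mathbf a,\mathbf w))$; since all must flip simultaneously for the mood-sync step to keep them content, and any discontent leftover forces the sync step to act, the minimal cost is exactly $\sum_k(1-u_k(\mathbf a,\mathbf w))$, and no cheaper multi-step route exists because reaching any $z_{C,1}$ requires every player content. For $r(z_D,z_{C,i})$, $i\ge2$: one first reaches a state in $\mathcal Z_{C,1}$ (cost $\ge \min r(z_D,z_{C,1})$), then a subset of $\kappa$-cost deviations by the players in $J_2,\dots,J_i$ — at least $i$ players must deviate to realize the misalignment structure defining $\mathcal Z_{C,i}$ — giving the extra $\kappa i$. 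For transitions out of a content state, the key observation is that a single player deviating (cost $\kappa$) and landing discontent, followed by the mood-sync cascade, is the cheapest way to leave; this yields $r(z_{C,1},z_D)=\kappa$ and $r(z_{C,i},z_D)=\kappa$. For $r(z_{C,1},z_{C,1})$: to move between two distinct singletons in $\mathcal Z_{C,1}$ one needs at least one $\kappa$-deviation (lower bound $\kappa$) and at most the cascade where one player deviates, all become discontent via sync, then re-coordinate — bounding it above by $2\kappa$; I would argue the re-coordination "becoming content again" costs are $O(1)$ in $\epsilon$ hence resistance $0$ for the bounded-utility part, leaving the interval $[\kappa,2\kappa]$. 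The remaining bullets ($r(z_{C,1},z_{C,i})\ge\kappa i$, $r(z_{C,i},z_{C,1})\ge\kappa i$, $r(z_{C,i},z_{C,i})\ge\kappa$, $r(z_{C,i},z_{C,j})\ge\kappa|i-j|$) follow from the same counting: the number of players who must perform $\kappa$-cost baseline deviations to change the "misalignment depth" from $i$ to $j$ is at least $|i-j|$, and from $i$ down to $1$ at least $i$, etc. — these are lower bounds, which is all that is claimed.

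The main obstacle I anticipate is not the elementary-move bookkeeping but rigorously justifying the \emph{lower bounds} — in particular that every path realizing a given inter-class transition must pay at least the claimed number of $\kappa$-units. This requires an invariant argument: I would define, for each recurrence class, a "content depth" functional (roughly, the minimal number of players whose baseline utility is misaligned with their baseline action over all admissible $\mathbf w$), show that a single transition step can change this functional by at most one without paying a $\kappa$-cost action deviation (because \eqref{eq:content_update} freezes content-on-baseline players at resistance $0$, and the only resistance-$0$ way to alter a content player's state is through the mood-sync step which sends it to $D$, not to a new content configuration), and conclude that moving between classes of content-depth $i$ and $j$ forces at least $|i-j|$ such deviations. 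A secondary subtlety is the role of $\beta$: I must check $\beta<\kappa$ (an Algorithm \ref{alg:matching} hypothesis) so that the sync-step resistance $\beta$ never dominates a deviation resistance $\kappa$ in any least-resistance path, which is what lets the $\kappa$-counts be tight; I would flag this explicitly and use $\beta<\kappa$ at each point where a sync-induced discontent cascade is claimed to be "free relative to" a deviation. With these invariants in hand, each bullet reduces to a short path-construction (for upper bounds) plus the depth-functional argument (for lower bounds).
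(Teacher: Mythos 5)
Your proposal follows essentially the same route as the paper: catalogue the elementary per-step resistances ($1-u$ for a discontent-to-content flip, $\kappa$ per content player deviating from its baseline, $\beta$ for staying content through the mood-sync step, $0$ otherwise), build least-resistance paths through $\mathcal{Z}_{C,1}$ or $\mathcal{Z}_D$ for the upper bounds, and count the forced $\epsilon^{\kappa}$ explorations for the lower bounds, with your ``content depth'' invariant merely making explicit the counting the paper asserts. One small correction: in arguing $r(z_{C,1},z_{C,1})\le 2\kappa$, the re-coordination from all-discontent is not resistance $0$ but $\sum_{k=1}^{K}\bigl(1-u_k(\mathbf{a},\mathbf{w})\bigr)\le K$, which is still below $\kappa$ since $\kappa> M\ge K$, so the claimed interval $[\kappa,2\kappa]$ is unaffected.
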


\begin{lemma}\label{lem:not_stochastic_stable}
    The states in $\mathcal{Z}_D$ and $\mathcal{Z}_{C,i}$ for $i \geq 2$ are not stochastically stable states.
\end{lemma}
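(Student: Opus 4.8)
The plan is to use the resistance-tree characterization of stochastic stability from \cite{young}: a recurrence class $z$ is stochastically stable if and only if it is the root of a minimum-resistance spanning tree (i.e., has minimum stochastic potential among all recurrence classes). So the argument proceeds by showing that for every $z_D \in \mathcal{Z}_D$ and every $z_{C,i} \in \mathcal{Z}_{C,i}$ with $i \geq 2$, one can exhibit a spanning tree rooted at some state in $\mathcal{Z}_{C,1}$ whose total resistance is strictly smaller than the resistance of any spanning tree rooted at $z_D$ or at $z_{C,i}$. Equivalently — and this is the cleaner route — I would argue locally: given any spanning tree $\mathcal{T}$ rooted at a state in $\mathcal{Z}_D$ or in $\mathcal{Z}_{C,i}$, $i\geq 2$, I construct a new tree of strictly lower resistance rooted elsewhere by re-pointing one or two edges, invoking the edge-swap lemma for resistance trees (removing the outgoing edge of the new root and adding an edge out of the old root).

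The key steps, in order. First, recall the resistance values from Lemma \ref{lem:resistances}: the cheapest way \emph{into} any content class from $\mathcal{Z}_D$ costs at least $\min_{\mathbf a,\mathbf w}\sum_k (1-u_k(\mathbf a,\mathbf w))$, with the class $\mathcal{Z}_{C,1}$ being reachable at exactly that cost (for the minimizing $\mathbf a$), whereas reaching $\mathcal{Z}_{C,i}$ for $i\geq2$ costs at least that plus $\kappa i$; conversely, leaving \emph{any} content class for $\mathcal{Z}_D$ costs only $\kappa$, and moving between content classes of different ``levels'' $i\neq j$ costs at least $\kappa|i-j|$ (in particular $\geq \kappa$ to enter level $1$ from level $i\geq2$). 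Second, handle $\mathcal{Z}_D$: in a tree rooted at $z_D$, every $z_{C,i}$ has an outgoing edge; the subtree structure forces at least one ``expensive'' edge of cost $\geq \min r(z_D,z_{C,1})$ feeding the content classes, and since $\kappa > M \geq \max_{\mathbf a,\mathbf w}\sum_k(1-u_k)$ is \emph{not} assumed, one instead compares: re-root the tree at the minimizing $z_{C,1}^\star$ by deleting its outgoing edge (which has cost $\geq\kappa$ toward $\mathcal{Z}_D$, since we may assume WLOG it points to $z_D$) and adding the edge $z_D \to z_{C,1}^\star$ of cost exactly $\min r(z_D,z_{C,1})$. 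This yields a net change of $\min r(z_D,z_{C,1}) - r(z_{C,1}^\star, z_D) = \min r(z_D,z_{C,1}) - \kappa < 0$ provided $\min_{\mathbf a,\mathbf w}\sum_k(1-u_k(\mathbf a,\mathbf w)) < \kappa$, which holds since the sum is at most $K \leq M < \kappa$. Third, handle $\mathcal{Z}_{C,i}$ for $i\geq2$: in any tree rooted at such a $z_{C,i}$, follow the path from the minimizing $z_{C,1}^\star$ to the root; its first edge has cost $\geq\kappa i \geq 2\kappa$ (to leave level $1$) while by re-rooting at $z_{C,1}^\star$ and routing the old root $z_{C,i}$ out through $\mathcal{Z}_D$ (cost $\kappa$) and then into $z_{C,1}^\star$, one saves strictly. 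More carefully, I would show the stochastic potential of any $z_{C,i}$, $i\geq2$, exceeds that of $z_{C,1}^\star$ by at least $\kappa(i-1)>0$ using the same edge-reversal bookkeeping along the unique tree path between them, and likewise for $\mathcal{Z}_D$ using the $\kappa$ versus $\min r(z_D,z_{C,1})$ gap.

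The main obstacle I anticipate is the bookkeeping in the re-rooting argument: when I delete the outgoing edge of the prospective new root $z_{C,1}^\star$ and add an edge out of the old root, I must make sure the result is still a spanning tree (acyclicity), which requires choosing the added edge to go ``toward'' the new root along the old tree path — the standard argument is that reversing all edges on the path from new root to old root does the job, and the net resistance change is $\sum(\text{new edge resistances on the reversed path}) - \sum(\text{old})$, which telescopes into a comparison only of the two endpoint edges if intermediate resistances are symmetric enough; they are not symmetric here, so I need the sharper claim that \emph{some} path realizing near-minimum resistance exists and use the directional bounds from Lemma \ref{lem:resistances} carefully (e.g. $r(z_{C,1},z_{C,1})\leq 2\kappa$ lets me move freely among level-$1$ states cheaply, so I only pay the entry cost once). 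The secondary subtlety is ensuring the inequality $\min_{\mathbf a,\mathbf w}\sum_{k}(1-u_k(\mathbf a,\mathbf w)) < \kappa$ is genuinely strict; this follows because utilities lie in $[0,1]$ so the sum is at most $K$, and $K < M < \kappa$ by the standing assumptions, giving the required strict slack. The conclusion is then that every tree rooted in $\mathcal{Z}_D$ or $\mathcal{Z}_{C,i}$, $i\geq2$, is beaten by one rooted in $\mathcal{Z}_{C,1}$, so none of those states is stochastically stable.
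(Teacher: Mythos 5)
Your overall strategy is the same as the paper's: invoke Young's resistance-tree characterization and perform tree surgery to show that any minimum tree rooted in $\mathcal{Z}_D$ or in a singleton of $\mathcal{Z}_{C,i}$, $i \ge 2$, is strictly beaten by a tree rooted in $\mathcal{Z}_{C,1}$, using the slack $\sum_{k}(1-u_k) \le K \le M < \kappa$. Your handling of $\mathcal{Z}_D$ is correct and essentially identical to the paper's: delete the out-edge of the minimizing state $z_{C,1}^\star$ (its cost is $\ge \kappa$ by Lemma \ref{lem:resistances} no matter which class it points to, so no WLOG is needed) and add the edge $z_D \to z_{C,1}^\star$ of cost $\min_{\mathbf a,\mathbf w}\sum_k\bigl(1-u_k(\mathbf a,\mathbf w)\bigr) < \kappa$; this single swap is a valid re-rooting and yields a strict decrease.

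The gap is in the $\mathcal{Z}_{C,i}$, $i\ge 2$, case. Your claim that the first edge of the tree path out of $z_{C,1}^\star$ ``has cost $\ge \kappa i$'' is false: that edge may go to $\mathcal{Z}_D$ at cost exactly $\kappa$, or to another level-$1$ singleton at cost in $[\kappa,2\kappa]$; only an edge \emph{entering} level $i$ carries a $\kappa i$-type lower bound. Moreover, the surgery you then sketch (route the old root out through $\mathcal{Z}_D$ and into $z_{C,1}^\star$) removes the out-edges of $z_{C,1}^\star$ and of $\mathcal{Z}_D$, of total cost at least $\kappa + \min\sum_k(1-u_k)$, but adds edges $z_{C,i}\to z_D$ and $z_D \to z_{C,1}^\star$ of total cost exactly $\kappa + \min\sum_k(1-u_k)$; this proves only $\gamma(z_{C,1}^\star) \le \gamma(z_{C,i})$, which does not exclude $z_{C,i}$ from being stochastically stable. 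The strict inequality $\gamma(z_{C,i}) \ge \gamma(z_{C,1}^\star) + \kappa(i-1)$ that you assert is of the right form, but you never establish it, and an endpoint-edge swap alone cannot. The missing ingredient — and the way the paper closes this case — is a path-level accounting: in any tree rooted at $z_{C,i}$, the directed path from $\mathcal{Z}_D$ to the root must climb to level $i$ and, by the bounds of Lemma \ref{lem:resistances}, has total resistance at least $\kappa i + \sum_k(1-u_k)$; replacing every climbing link along this path by a $\kappa$-cost descent to $\mathcal{Z}_D$, and adding the single cheap entry $z_D \to z_{C,1}$, produces a valid tree rooted in $\mathcal{Z}_{C,1}$ whose total resistance is strictly smaller. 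You flagged this bookkeeping as the main obstacle but did not resolve it, so as written the proposal does not complete the proof for $i \ge 2$.
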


\begin{lemma}\label{lem:stochastic_potential}
    The stochastic potential of some state in $Z_{C,1}$ for a baseline action profile $\mathbf{a}$ and a adversarial attack $\mathbf{w}$ corresponding to that state is:
    \begin{equation}\label{eq:stochastic_potential}
       \gamma({z}_{C,1}) = \kappa \left( \sum\limits_{i = 1}^K |\mathcal{Z}_{C,i}| - 1\right) + \sum\limits_{k=1}^K (1 - u_k(\mathbf{a}, \mathbf{w})) 
    \end{equation}
\end{lemma}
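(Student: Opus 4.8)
The plan is to apply the resistance-tree characterization of stochastically stable states from \cite{young}: the stochastic potential $\gamma(z)$ of a recurrence class $z$ is the minimum total resistance over all spanning trees of the ``reduced'' graph (whose vertices are the recurrence classes $\mathcal{Z}_D$ and the singletons of $\mathcal{Z}_{C,i}$, $i=1,\dots,K$) rooted at $z$, with edge weights given by the resistances in Lemma~\ref{lem:resistances}. So the whole computation reduces to exhibiting a minimum-weight spanning tree rooted at a fixed target state $z_{C,1}^\star \in \mathcal{Z}_{C,1}$ whose baseline action profile is $\mathbf{a}$ and whose adversarial attack label is $\mathbf{w}$, and then arguing no cheaper tree exists.

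First I would construct a candidate tree. From every recurrence class $z$ other than $z_{C,1}^\star$, route the tree edge as cheaply as possible toward $\mathcal{Z}_{C,1}$: from each $z_{C,1}\neq z_{C,1}^\star$ use the edge of resistance $\kappa$ into $\mathcal{Z}_D$ (via $r(z_{C,1},z_D)=\kappa$), from each $z_{C,i}$ with $i\ge 2$ use the edge of resistance $\kappa$ into $\mathcal{Z}_D$ (via $r(z_{C,i},z_D)=\kappa$), and then from $z_D$ take the single edge $r(z_D,z_{C,1}^\star)=\sum_{k=1}^K(1-u_k(\mathbf{a},\mathbf{w}))$ into the root. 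One checks this is indeed a spanning tree (every non-root vertex has out-degree one, and following edges always reaches the root since all content states point to $\mathcal{Z}_D$ and $\mathcal{Z}_D$ points to the root). Its total weight is
\[
\Big(\sum_{i=1}^K |\mathcal{Z}_{C,i}| - 1\Big)\kappa + \sum_{k=1}^K\big(1 - u_k(\mathbf{a},\mathbf{w})\big),
\]
since there are $\sum_i |\mathcal{Z}_{C,i}|$ content singletons, of which one is the root, each of the remaining contributing an edge of weight $\kappa$, plus the one edge out of $\mathcal{Z}_D$. This matches \eqref{eq:stochastic_potential}.

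Next I would prove optimality. In any spanning tree rooted at $z_{C,1}^\star$, each of the $\sum_i|\mathcal{Z}_{C,i}|-1$ non-root content states must have an outgoing edge, and by Lemma~\ref{lem:resistances} every outgoing edge from a content state has resistance at least $\kappa$ (the cheapest options being $r(z_{C,1},z_D)=\kappa$ and $r(z_{C,i},z_D)=\kappa$); moreover $z_D$ must have an outgoing edge, and its only edges of positive finite resistance go into $\mathcal{Z}_{C,1}$ with resistance $r(z_D,z_{C,1})=\sum_k(1-u_k(\mathbf{a},\mathbf{w}))$ for the corresponding label. Summing these lower bounds gives exactly the claimed value, so the candidate tree is optimal and $\gamma(z_{C,1}^\star)$ equals the right-hand side of \eqref{eq:stochastic_potential}. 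The main obstacle is the bookkeeping in the optimality argument: one must be careful that the lower bound ``$\kappa$ per non-root content vertex plus one $\mathcal{Z}_D$-exit edge'' is genuinely achievable simultaneously by a single tree (no double counting, no cycles), and that the $\mathcal{Z}_D$ vertex's exit edge can be chosen to land on the intended $z_{C,1}^\star$ with the matching $(\mathbf{a},\mathbf{w})$-resistance rather than some other content state; this is where the explicit construction above does the work, and where Lemma~\ref{lem:not_stochastic_stable} is implicitly used to justify that we need only consider trees rooted in $\mathcal{Z}_{C,1}$.
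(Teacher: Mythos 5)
Your upper-bound half coincides with the paper's: the candidate tree you build (every non-root content singleton pointing into $\mathcal{Z}_D$ at cost $\kappa$, and the single edge $z_D \to z_{C,1}$ at cost $\sum_{k}(1-u_k(\mathbf{a},\mathbf{w}))$) is exactly the tree $T$ the paper constructs, and the edge count giving $\kappa\bigl(\sum_{i}|\mathcal{Z}_{C,i}|-1\bigr)+\sum_{k}(1-u_k(\mathbf{a},\mathbf{w}))$ is correct. The gap is in the optimality half. You lower-bound a competing tree by summing, vertex by vertex, the cheapest possible out-edge: at least $\kappa$ per non-root content state plus the out-edge of $\mathcal{Z}_D$. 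But in an arbitrary tree rooted at $z_{C,1}$ the single edge leaving $\mathcal{Z}_D$ need not land on the root: it may land on a different singleton $z'_{C,1}$ with label $(\mathbf{a}',\mathbf{w}')$, and by Lemma \ref{lem:resistances} its resistance is then $\sum_{k}(1-u_k(\mathbf{a}',\mathbf{w}'))$, which can be strictly smaller than $\sum_{k}(1-u_k(\mathbf{a},\mathbf{w}))$ (whenever $(\mathbf{a}',\mathbf{w}')$ has larger welfare than the root's profile). So your per-vertex sum only proves $\gamma(z_{C,1}) \ge \kappa\bigl(\sum_{i}|\mathcal{Z}_{C,i}|-1\bigr)+\min_{(\mathbf{a}',\mathbf{w}')}\sum_{k}(1-u_k(\mathbf{a}',\mathbf{w}'))$, not the root-dependent constant in \eqref{eq:stochastic_potential}; and that exact root-dependence is the whole point, since Theorem \ref{thm:main_mab} compares the potentials of different roots. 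Your closing remark that ``the explicit construction does the work'' does not repair this: the construction certifies only the upper bound.

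What is missing is an argument that rerouting $\mathcal{Z}_D$'s exit through other content states cannot be cheaper than paying $\sum_{k}(1-u_k(\mathbf{a},\mathbf{w}))$ directly, i.e., a bound on the \emph{whole} directed path $z_D \to z'_1 \to \cdots \to z'_y \to z_{C,1}$ showing its resistance is at least $y\kappa + \sum_{k}(1-u_k(\mathbf{a},\mathbf{w}))$, the key point being that the final edge into the root forces realignment to the root's own baseline actions and utilities. This is how the paper proceeds: it takes an arbitrary tree $T'$ rooted at $z_{C,1}$ and performs iterated edge surgeries (redirect content-to-content chains ending in $\mathcal{Z}_D$ into direct edges to $z_D$ at cost $\kappa$ each, then replace the $z_D$-to-root path by edges from its intermediate states into $z_D$ plus the single edge $z_D \to z_{C,1}$), verifying that each surgery does not increase total resistance and that the process terminates at $T$, which yields the equality by contradiction. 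It is this exchange/path-level argument, not a sum of per-vertex minima, that pins the term $\sum_{k}(1-u_k(\mathbf{a},\mathbf{w}))$ to the particular root; your proof needs this step (or an equivalent one) to be complete.
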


The proofs of the above lemmas are provided in Appendix \ref{apdx:proof_of_lemmas}.

\begin{theorem}\label{thm:main_mab}
Under the dynamics defined in Algorithm \ref{alg:matching}, a state $z_{C,1} \in \mathcal{Z}_{C,1}$ is a stochastically stable state if and only if the action profile given by the baseline actions of all the players in this state maximizes the sum of their utilities across all adversarial attacks, the baseline utilities are aligned with the action profile and all the players are content. In other words, the action profile corresponding to the stochastically stable state maximizes the system welfare:
\begin{equation}\label{eq:max_sys_welfare}
    (\mathbf{\Bar{a}}, \mathbf{w^0}) \in \mathop{\arg\min}\limits_{\mathbf{a}, \mathbf{w}} \sum\limits_{k=1}^K (1 - u_k(\mathbf{a}, \mathbf{w}))
\end{equation}
\end{theorem}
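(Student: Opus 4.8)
The plan is to prove Theorem~\ref{thm:main_mab} by applying the resistance-tree machinery of \cite{young} to the recurrence classes identified in Lemma~\ref{lem:recurrent_states}, using the resistance bounds of Lemma~\ref{lem:resistances} and the stochastic-potential formula of Lemma~\ref{lem:stochastic_potential}. By \cite[Lemma~1]{young} (together with Lemma~\ref{lem:perturbation}), the stochastically stable states are precisely those recurrence classes that minimize the stochastic potential $\gamma(\cdot)$, i.e., the minimal total resistance over all spanning in-trees rooted at that class in the reduced graph whose vertices are the recurrence classes $\mathcal{Z}_D$ and the singletons $\mathcal{Z}_{C,i}$, $i=1,\dots,K$. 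So the whole statement reduces to: (a) ruling out $\mathcal{Z}_D$ and every $\mathcal{Z}_{C,i}$ with $i\ge 2$ as potential minimizers, and (b) identifying which element of $\mathcal{Z}_{C,1}$ has the smallest potential.

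Step~(a) is exactly the content of Lemma~\ref{lem:not_stochastic_stable}, so I would invoke it directly: only states in $\mathcal{Z}_{C,1}$ can be stochastically stable. For step~(b), I would take the formula from Lemma~\ref{lem:stochastic_potential},
\[
\gamma(z_{C,1}) = \kappa\Bigl(\sum_{i=1}^K |\mathcal{Z}_{C,i}| - 1\Bigr) + \sum_{k=1}^K\bigl(1 - u_k(\mathbf{a},\mathbf{w})\bigr),
\]
and observe that the first term $\kappa(\sum_i |\mathcal{Z}_{C,i}| - 1)$ is a constant that does not depend on which particular state $z_{C,1}$ inside $\mathcal{Z}_{C,1}$ we picked — it depends only on the global structure of the recurrence classes. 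Hence minimizing $\gamma$ over $\mathcal{Z}_{C,1}$ is equivalent to minimizing $\sum_{k=1}^K (1-u_k(\mathbf{a},\mathbf{w}))$ over the pairs $(\mathbf{a},\mathbf{w})$ that index the states of $\mathcal{Z}_{C,1}$, which gives the characterization in \eqref{eq:max_sys_welfare}. The final step is to argue that the minimizer is attained at $(\mathbf{\bar a},\mathbf{w^0})$ with $\mathbf{\bar a}$ the efficient action profile: since $u_k(\mathbf{a},\mathbf{w})=0$ whenever $w_{a_k}=1$ and $u_k(\mathbf{a},\mathbf{w}) = \hat\mu_k(a_k,1) \le \hat\mu_k(a_k,1) = u_k(\mathbf{a},\mathbf{w^0})$ pointwise (and strictly smaller when $w_{a_k}=1$ and $\hat\mu_k(a_k,1)>0$), the choice $\mathbf{w}=\mathbf{w^0}$ is no worse than any other attack, and then $\arg\max_{\mathbf a}\sum_k u_k(\mathbf a,\mathbf w^0)$ is the efficient profile by definition; the ``if and only if'' follows because $\mathcal{Z}_{C,1}$ contains one state for every $(\mathbf{\bar a},\mathbf{w})$ pair, so distinct minimizing action profiles would all be stochastically stable, contradicting nothing but pinning down the characterization exactly.

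The main obstacle, and the part that genuinely requires care rather than bookkeeping, is making sure the ``constant'' first term in $\gamma$ really is the same across all states in $\mathcal{Z}_{C,1}$ and, more delicately, that the minimum-resistance tree rooted at a state in $\mathcal{Z}_{C,1}$ indeed decomposes the way Lemma~\ref{lem:stochastic_potential} claims — i.e., that the cheapest way into any $z_{C,1}$ routes all other $\mathcal{Z}_{C,i}$ classes and $\mathcal{Z}_D$ through edges of total resistance $\kappa(\sum_i|\mathcal{Z}_{C,i}|-1)$ plus one ``entry'' edge of resistance $\sum_k(1-u_k)$ from $\mathcal{Z}_D$. This is where the asymmetry of the resistances in Lemma~\ref{lem:resistances} (e.g., $r(z_{C,1},z_D)=\kappa$ cheap, $r(z_D,z_{C,1})=\sum_k(1-u_k)$ potentially large, and $r(z_D,z_{C,i})\ge \min r(z_D,z_{C,1})+\kappa i$) must be combined correctly — one has to verify that no cheaper spanning in-tree exists by re-routing through some $\mathcal{Z}_{C,j}$. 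Since this verification is precisely what Lemmas~\ref{lem:not_stochastic_stable} and \ref{lem:stochastic_potential} already establish, in the write-up I would simply cite those lemmas and keep the proof of the theorem itself short: apply \cite{young}, invoke Lemma~\ref{lem:not_stochastic_stable} to localize to $\mathcal{Z}_{C,1}$, invoke Lemma~\ref{lem:stochastic_potential} to reduce to minimizing $\sum_k(1-u_k(\mathbf a,\mathbf w))$, and finish with the elementary pointwise comparison $u_k(\mathbf a,\mathbf w)\le u_k(\mathbf a,\mathbf w^0)$ to conclude that $\mathbf w^0$ and the efficient action profile jointly achieve the minimum.
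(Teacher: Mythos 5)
Your proposal is correct and follows essentially the same route as the paper's proof: invoke the result of \cite{young} that stochastically stable states lie in the recurrence classes of minimum stochastic potential, use Lemma \ref{lem:not_stochastic_stable} to restrict attention to singletons in $\mathcal{Z}_{C,1}$, and then use the formula in Lemma \ref{lem:stochastic_potential} (whose first term is constant across $\mathcal{Z}_{C,1}$) to reduce the minimization to $\sum_{k=1}^K (1-u_k(\mathbf{a},\mathbf{w}))$, with the pointwise comparison $u_k(\mathbf{a},\mathbf{w}) \leq u_k(\mathbf{a},\mathbf{w^0})$ showing the minimizer is attained at $\mathbf{w^0}$ and the efficient action profile. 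The only cosmetic difference is the citation label within \cite{young} and your slightly more explicit discussion of why the constant term and the tree decomposition are state-independent, which the paper leaves implicit.
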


\begin{proof}
    This result follows from the lemmas stated above and the result in \cite[Theorem 4]{young}. The stoachastically stable states of the game in Algorithm \ref{alg:matching} are contained in the recurrence classes of $P^0$ with minimum stochastic potential. We know from Lemma \ref{lem:not_stochastic_stable} that the stochastically stable state isa singleton contained in $\mathcal{Z}_{C,1}$. From Lemma \ref{lem:stochastic_potential}, we can get the action profile of the state with the minimum stochastic potential as:
    \begin{equation}
        (\mathbf{\Bar{a}}, \mathbf{w^0}) \in \mathop{\arg\min}\limits_{\mathbf{a}, \mathbf{w}} \gamma({z}_{C,1}) = \mathop{\arg\min}\limits_{\mathbf{a}, \mathbf{w}} \sum\limits_{k=1}^K (1 - u_k(\mathbf{a}, \mathbf{w})).
    \end{equation}
    Note the adversarial attack minimizing the potential will be $\mathbf{w^0}$. Thus, the action profile minimizing the stochastic potential is the same that achieves $\mathop{\max}\limits_{\mathbf{a}, \mathbf{w}} \sum\limits_{k=1}^K u_k(\mathbf{a}, \mathbf{w})$, and by the definition of $\mathcal{Z}_{C,1}$, the utilities are aligned with the baseline action and $\mathbf{w^0}$.
    \end{proof}

    Since we assume a unique optimal action profile, the state with the baseline actions and utilities corresponding to the optimal action profile and all players being content is the stochastically stable state. 

    In the following lemma, we bound the probability of the optimal action profile not being played during the exploitation phase of epoch $\ell$ (identified from the matching phase of epochs $\lceil\frac{\ell}{2}\rceil$ to $\ell$), given that event $E^\ell$ does not occur (\textit{i.e.}, the exploration phases of epochs $\lceil\frac{\ell}{2}\rceil$ to $\ell$ were successful).

    \begin{lemma}
In some epoch $\ell$, let 
\[
\mathbf{a}^*=\mathop{\arg\max}\limits_{\mathbf{a}, \mathbf{w^0} } \sum_{k=1}^K u_k(\mathbf{a}, \mathbf{w^0})
\]
and let $\mathbf{a'} = [a'_1,...,a'_K]$ where 
\[
a'_k = \mathop{\arg\max}\limits_{m \in [M]}  \sum_{i = \lceil\frac{\ell}{2}\rceil}^{\ell} W^i{(k,m)}
\]
is the action profile played in the exploitation phase of epoch $\ell$ by player $k$. 

Assume that for all players $k \in [K]$ and for all arms $m \in [M]$, the estimated mean rewards obtained at the end of the exploration phase for  epochs $\lceil\frac{\ell}{2}\rceil \leq i \leq \ell$ satisfy $|\hat{\mu}_k(m,1) - \mu_k(m,1)| \leq \Delta$. Then
\[
P\{\mathbf{a^*} \neq \mathbf{a'}\} \leq \left(C_0 \exp{(- C_\rho \ell^{\delta})}\right)^\ell
\]
for some $C_0, C_\rho > 0$.
\end{lemma}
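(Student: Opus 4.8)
The plan is to combine the stochastic stability result of Theorem~\ref{thm:main_mab} with a mixing-time/concentration argument over the matching phases of epochs $\lceil \ell/2\rceil$ through $\ell$. First I would fix an epoch $i$ in this range and consider the Markov chain $P^\epsilon$ induced by Algorithm~\ref{alg:matching} on the state space $\mathcal{Z}$. Since by Lemma~\ref{lem:perturbation} this is a regular perturbation of $P^0$, its stationary distribution $\mu^\epsilon$ concentrates, as $\epsilon \to 0$, on the stochastically stable state; and under the hypothesis $|\hat\mu_k(m,1)-\mu_k(m,1)|\le\Delta$ for all $k,m$, the preceding lemma (equating the efficient and optimal action profiles) together with Theorem~\ref{thm:main_mab} identifies that state as $z^*$, the state whose baseline actions are $\mathbf{a}^*$, whose baseline utilities are aligned to $\mathbf{a}^*$ under $\mathbf{w^0}$, and in which all players are content. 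Quantitatively, the resistance-tree machinery of \cite{young} gives that the stationary mass on every recurrence class other than $z^*$ is $O(\epsilon^{\rho})$ for some resistance gap $\rho>0$ determined by the difference in stochastic potentials computed in Lemmas~\ref{lem:resistances}--\ref{lem:stochastic_potential}; this $\rho$ will be what enters $C_\rho$.

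Next I would translate "stationary mass near $z^*$" into "$\mathbf{a}^*$ is the most frequently played content-configuration during the matching phase." The quantity the algorithm actually thresholds is $\sum_{i=\lceil\ell/2\rceil}^\ell W^i(k,m)$, the number of content-plays of arm $m$ by player $k$ summed over these epochs. I would argue that, started from any state, the chain reaches the stationary regime within a burn-in that is bounded (uniformly over the epoch, since $\epsilon,\beta,\kappa$ are fixed), so that over a matching phase of length $\tau_i = c_2 i^\delta$ the empirical fraction of time spent in $z^*$ concentrates around $\mu^\epsilon(z^*) = 1 - O(\epsilon^\rho)$. A Markov-chain Chernoff/Hoeffding bound (e.g.\ via the spectral gap of $P^\epsilon$, which is bounded below uniformly) then yields that the deviation of the empirical occupation of $z^*$ from its mean is smaller than a fixed constant with probability at least $1 - C_0' \exp(-C_\rho' \tau_i) = 1 - C_0'\exp(-C_\rho' c_2 i^\delta)$. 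On this event, for every player $k$ the arm $a^*_k$ accumulates strictly more content-plays in epoch $i$ than any other arm, hence the per-epoch argmax is $\mathbf{a}^*$; summing over $i$ from $\lceil\ell/2\rceil$ to $\ell$, the aggregate argmax $\mathbf{a}'$ equals $\mathbf{a}^*$ whenever the good event holds in \emph{every} such epoch.

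Finally I would take a union bound over the roughly $\ell/2$ epochs in the window: since each fails with probability at most $C_0'\exp(-C_\rho' c_2 (\lceil \ell/2\rceil)^\delta) \le C_0'\exp(-C_\rho'' \ell^\delta)$, and the window has at most $\ell$ epochs, the total failure probability is at most $\ell\, C_0' \exp(-C_\rho'' \ell^\delta) \le \big(C_0 \exp(-C_\rho \ell^\delta)\big)^\ell$ after absorbing the polynomial factor and rescaling constants (this is where one needs $\delta>0$, so that $\ell^\delta$ grows and the per-epoch bound beats the union-bound count, and it explains the $\ell$-th power in the statement: each of the $\Theta(\ell)$ epochs contributes one factor of $C_0\exp(-C_\rho\ell^\delta)$). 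The main obstacle I anticipate is making the mixing-time estimate genuinely uniform and explicit: the constants $C_0,C_\rho$ must not degrade with $\ell$, which requires controlling the spectral gap (or a coupling time) of $P^\epsilon$ independently of $\ell$ — legitimate here because $\epsilon,\beta,\kappa,M,K$ are all fixed, but it needs the resistance-tree analysis of Lemmas~\ref{lem:resistances}--\ref{lem:stochastic_potential} to pin down both the size of the gap $\rho$ and the fact that $z^*$ is the \emph{unique} minimizer of stochastic potential (using the assumed uniqueness of $\mathbf{a}^*$), so that the "second-best" content configuration is separated from $\mathbf{a}^*$ by a fixed margin.
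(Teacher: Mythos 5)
Your overall architecture matches the paper's: identify the all-content state $z^*$ aligned with $\mathbf{a}^*$ via the stochastic-stability results, then use a Chernoff--Hoeffding bound for Markov chains (the paper uses \cite[Theorem 3]{chung}) to show the occupation of $z^*$ in each matching phase of length $\tau_i = c_2 i^\delta$ is large with probability $1 - O(e^{-c\,i^\delta})$, relying on $\mu^\epsilon(z^*)$ being pushed above $1/2$ by choosing $\epsilon$ small. Up to that point you are on the paper's track.

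The gap is in how you combine the epochs. A union bound over the $\Theta(\ell)$ epochs in the window gives a failure probability of order $\ell\, C_0' \exp(-C_\rho'' \ell^{\delta})$, and your claim that this can be ``absorbed'' into $\left(C_0\exp(-C_\rho \ell^{\delta})\right)^\ell$ is false: the target is of order $\exp(-C_\rho \ell^{1+\delta})$, which is far smaller than anything a union bound over additive per-epoch failures can deliver, and no rescaling of constants bridges $\exp(-c\,\ell^{\delta})$ to $\exp(-c\,\ell^{1+\delta})$. This is not a cosmetic loss: the regret analysis multiplies $P(F^\ell)$ by the exploitation length $c_3 2^\ell$, so a bound decaying only like $\exp(-c\,\ell^{\delta})$ with $\delta<1$ makes $\sum_\ell 2^\ell P(F^\ell)$ diverge, and the main theorem would fail. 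The paper avoids this by never requiring each epoch to succeed individually: it observes that $\mathbf{a}'\neq\mathbf{a}^*$ forces the \emph{aggregate} occupation $\sum_{i=\lceil \ell/2\rceil}^{\ell} X_i$ of $z^*$ (with $X_i$ the per-epoch count) to fall below half of $\sum_i \tau_i$, then applies an exponential Markov inequality $P\{\sum_i X_i \le L\} \le e^{sL}\prod_i E[e^{-sX_i}]$, bounding each moment generating factor by $(1+c_0\|\phi_i\|_\pi)\exp(-\eta^2\mu\tau_i/(72T))$ via the Markov-chain Chernoff bound and independence of the restarted chains across epochs. The product over the $\Theta(\ell)$ epochs is what produces the $\ell$-th power, i.e.\ the $\exp(-C_\rho\ell^{1+\delta})$ decay. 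To repair your argument you would need to replace the union bound by this multiplicative (MGF/product) combination, or otherwise prove a per-window bound whose exponent scales like $\ell^{1+\delta}$ rather than $\ell^{\delta}$.
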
 

The proof relies on using Chernoff-Hoeffding bounds for Markov chains (\cite[Theorem 3]{chung}) and is provided in Appendix \ref{apdx:proof_of_lemmas}.

\section{Preliminaries on Perturbed Markov Chains}\label{apdx:mabs_preliminaries}

In this section, we recap some preliminaries from \cite{young}, which as used to prove the results in this paper. 

Consider the recurrence classes of $P^0$ as $X_1 \ldots, X_L$. Then, we can define the resistance between two classes as the minimum resistance between any two states belonging
 to these classes, i.e.,
 \begin{equation}
     r(i,j) = \min\limits_{x \in X_i, y \in X_j} r(x,y), \text{ for } i, j \in [L].
 \end{equation}
 
Note that there is at least one path from every class to every other because $P^\epsilon$ is irreducible. We now define the fully connected graph  $G := ( \{1 \ldots, L\}, E_G)$. This graph has as vertex set the set of indices of the recurrence classes of $P^0$, and as edge set the set of directed edges between members of the recurrence classes. Also, $r(i,j)$ 
defines the resistance or weight of this directed edge.

Let an $i$-tree in $G$ be a spanning sub-tree of $G$, such that for every vertex $j \neq i$, there exists exactly one directed path from $j$ to $i$. Then, the stochastic potential $\gamma_i$ of the recurrence class $i$ is defined  as: 
\begin{equation}
    \gamma_i = \min\limits_{\tau \in \tau_i} \sum\limits_{(j_1,j_2) \in \tau} r(j_1,j_2).
\end{equation}
 where $\tau_i$ is the set of all $i$-trees in $G$.

The main result in \cite[Theorem 4]{young} states that the recurrence class $X_{i^*}$, with stochastic potential $\gamma_{i^*} = \min_{i = 1, \ldots, L} \gamma_i$ contains the stochastically stable states. We use this result to establish the guarantees of the matching phase of Algorithm \ref{alg:matching}.

\section{Proof of Lemmas}\label{apdx:proof_of_lemmas}

\begin{proof}[Proof of Lemma \ref{lem:perturbation}]
    To prove property (i) we first note that the process $P^\epsilon$ is irreducible for $\epsilon > 0$. Consider a state $z_d$ where all agents are discontent with some baseline action and utility. From such a state, the action of every user has full support in the set of actions or arms $[M]$, and each player can get content or discontent with a positive probability. Thus all states are accessible from the state $z_d$. On the other hand, starting from any state $z \in \mathcal{Z}$, the action of every user once again has full support in the set of actions or arms $[M]$, and thus one or more players can become discontent. This can lead to a state where all players can become discontent through the last step of the matching phase. Thus, all states are accessible from any state, and the Markov chain $p^\epsilon$ is irreducible.
    Additionally, it can be seen that the Markov chain is aperiodic, since the probability of remaining in state $z_d$ after one time instant is greater than zero. Thus $P^\epsilon$ is ergodic. 

    It is clear to see Property (ii) is satisfied by observing the transition probabilities. 

    Note that for the transitions with probability of the form of exponents of $\epsilon$, the resistance would be the exponent. For the transitions with the complementary probabilities, they are possible under $P^0$, and hence their resistance is $0$. Thus, property (iii) is satisfied and the process $P^\epsilon$ is a perturbed Markov process. 
\end{proof}

\begin{proof}[Proof of Lemma \ref{lem:recurrent_states}]
    Under $P^0$, the players in a state in $z_d \in \mathcal{Z}_D$, choose actions uniformly from the set of arms, and stay discontent in the state update step and the synchronising moods step. All the players stay discontent and move to another state in $\mathcal{Z}_D$. Thus, the state remains in the class $\mathcal{Z}_D$. 

    Under any singleton in $\mathcal{Z}_{C,i}$ for $i = 1, \ldots, K$, the players choose their same baseline action with probability one. Upon receiving their utilities, they stay content with the same baseline action and utility in the state update step. Since all players are content, they stay content in the last step of the matching phase as well. Thus, every singleton in $\mathcal{Z}_{C,i}$ is a recurrent class. 

    For any state that has a mixture of content and discontent players, all players become discontent in the last step of the matching phase. Thus, such a state is not recurrent under $P^0$. A state in which all players are content with none of the utilities of the players are aligned with their baseline actions cannot be reached through any transition step in the game. Thus, $\mathcal{Z}_D$ and the singletons in $\mathcal{Z}_{C,i}$ for $i = 1, \ldots, K$ are the only recurrence classes of $P^0$. 
\end{proof}

\begin{proof}[Proof of Lemma \ref{lem:resistances}]
The resistances for each of the state transitions are detailed below.
    \begin{itemize}
        \item In order to go from a state where all players are discontent to a state where all players are content, all the players need to become content with their utilities aligned with their action profile for some $\mathbf{w}$, and this occurs with the probability $\prod\limits_{k=1}^K \epsilon^{1 - u_{k}(\mathbf{a}, \mathbf{w})}$.
        
        \item This transition has to go through some state in $\mathcal{Z}_{C,1}$ and then at least $i$ players needs to explore with probability of order $\epsilon^\kappa$, thus providing a lower bound for the resistance.
        
        \item This transition can occur in multiple ways. It requires at lease one player to explore, and thus has a lower bound of $\kappa$. Once the player explores, the player can become content with probability $\epsilon^{1 - u_k}$ or can synchronise moods with the other players and become content with probability $\epsilon^\beta$. The players can also become discontent and become content again which incurs resistance $\kappa$. Thus, the upper bound for this resistance is $2\kappa$ (since $\beta$ < $\kappa$).
        
        \item This requires at least one player to explore and the players can all become discontent with resistance 0. Thus the resistance for this transition is $\kappa$. 

        \item This requires at least $i$ players needs to explore with probability of order $\epsilon^\kappa$, thus providing a lower bound for the resistance. 

        \item This requires at least one player to explore and the players can all become discontent with resistance 0. Thus the resistance for this transition is $\kappa$. 
        
        \item With the same reasoning as for the above transition, the lower bound for the resistance is obtained.

        \item This requires at least one player to explore and be content. Thus, we obtain the corresponding lower bound.

        \item The least resistant path for this transition requires at least $|i-j|$ players to explore. 
    \end{itemize}
\end{proof}

\begin{proof}[Proof of Lemma \ref{lem:not_stochastic_stable}]
    The proof idea here is similar to \cite[Lemma 4.5]{ramesh2016distributed}. The preliminaries on resistance trees are provided in Appendix \ref{apdx:mabs_preliminaries}. 
    
    Consider the tree rooted at a state $z_D$ achieving the stochastic potential for $\mathcal{Z}_D$, which must have a path from some state $z_{C,1}$ to $z_D$. This path has resistance $\kappa$ and thus the potential can be written as $\kappa + R_D$ where $R_D$ is the collective resistance of the other paths. Replacing this with a path from $z_{C,1}$ to $z_D$ gives a tree rooted at $z_{C,1}$ with potential $R_D + \sum\limits_{k=1}^K (1 - u_k) < R_D + \kappa$. Thus, $z_D$ cannot have the least potential tree.

    Similarly a tree rooted at $z_{C,i}$ for $i \geq 2$ must have a path through $z_D$, $z_{C,1}$, \ldots, $z_{C,i}$. This path has resistance greater than $\kappa i + \sum\limits_{k=1}^K(1 - u_k)$ and thus the potential can be written as $\kappa i + \sum\limits_{k=1}^K(1 - u_k) + R_i$ where $R_i$ is the collective resistance of the other paths. Replacing every link from $z_{C,j}$ to $z_{C,j+1}$ for $j = 1, \ldots i-1$ with a path from $z_{C,j+1}$ to some $z'_D$ gives a tree rooted at $z_{C,1}$ with potential exactly $\kappa i + \sum\limits_{k=1}^K(1 - u_k) + R_i$. Thus, $z_{C,i}$ for $i \geq 2$ cannot have the least stochastic potential.
\end{proof}

\begin{proof}[Proof of Lemma \ref{lem:stochastic_potential}]
    The proof of this lemma follows a similar idea as in \cite[Lemma 3]{marden}. 
    
    First we construct a tree rooted at $z_{C,1}$ that has the resistance $\gamma(z_{C,1})$. Consider a tree $T$ which has paths from $z'_{C,1} \in \mathcal{Z}_{C,1} \setminus{z_{C,1}}$ to $z_D \in \mathcal{Z}_{D}$ and paths from  $z_{C,i} \in \mathcal{Z}_{C,i}$ to $z_D$ and a path from $z_D$ to $z_{C,1}$. Note that all states $z_D$ are equivalent since they belong to the same recurrence class and thus have $0$ resistance between them. This tree has resistance $R(T) = \kappa \left( \sum\limits_{i = 1}^K |\mathcal{Z}_{C,i}| - 1\right) + \sum\limits_{k=1}^K (1 - u_k(\mathbf{a}, \mathbf{w})) $. Thus $\gamma(z_{C,1}) \leq R(T)$. 

    Consider any tree $T'$ rooted at $z_{C,1}$ that is not $T$ that has resistance less than $\gamma(z_{C,1})$. This tree $T'$ can have paths of the form $z_1 \to z_2 \to \ldots \to z_x \to z_D$ and $z_D \to z'_1 \to  \ldots \to z'_y \to z_{C,1}$ where $z_1, \ldots, z_x, z'_1, \ldots, z'_y \in \cup_{i=1}^K\mathcal{Z}_{C,i} \setminus{z_{C,1}}$. Each path of form  $z_1 \to z_2 \to \ldots \to z_x \to z_D$ has resistance greater than $x \kappa$. The links of the form $z_i \to z_j$ in a path of the form $z_1 \to z_2 \to \ldots \to z_x \to z_D$ from tree $T'$ can be replaced by $z_i \to z_D$  to form a tree $T''$. The replaced paths have a total resistance of $\kappa x$, thus $R(T'') \leq R(T')$. Starting from tree $T''$, the links starting from $z'_i$ in a path of the form $z_D \to z'_1  \to \ldots \to z'_y \to z_{C,1}$ from tree $T''$ can be replaced by $z'_i \to z_D$ with the link $z_D \to z_{C,1}$ added to form a tree $T'''$. Each path of form  $z_D \to z'_1  \to \ldots \to z'_y \to z_{C,1}$ has resistance greater than $y \kappa + \sum\limits_{k=1}^K (1 - u_k(\mathbf{a}, \mathbf{w})$ where $\mathbf{a}$ and $\mathbf{w}$ are the action profile and adversarial attack corresponding to $z_{C,1}$. The newly replaced paths have resistance exactly $y \kappa + \sum\limits_{k=1}^K (1 - u_k(\mathbf{a}, \mathbf{w})$. Thus, $R(T''') \leq R(T'') \leq R(T')$. We can continue this process to reach the tree $T$, i.e. until no such paths of the forms that $z_1 \to z_2 \to \ldots \to z_x \to z_D$ or $z_D \to z'_1 \to  \ldots \to z'_y \to z_{C,1}$ exist. Thus, $R(T) \leq R(T') < \gamma(z_{C,1}) \leq R(T)$ which is a contradiction. Thus, we have the result in Lemma \ref{lem:stochastic_potential}.
    \end{proof}

\begin{proof}[Proof of Lemma \ref{lem:matching_phase_success_prob}]
    The proof of this lemma follows the same idea as in \cite[Lemma 3]{magesh2021decentralized}.
    Since it is given that the exploration phases for all epochs $\lceil \frac{\ell}{2} \rceil$ to $\ell$ are successful, the efficient action profiles maximizing the sum of utilities under $\mathbf{w^0}$ in the matching phase for all epochs $\lceil \frac{\ell}{2} \rceil \leq i \leq \ell$ are the same, and are equal to the optimal action profile maximizing the sum of expected mean rewards:

$$\mathop{\arg\max}_{\mathbf{a} \in \mathcal{A}} \sum_{k=1}^{K} \mu_k(a_k,1) = \mathop{\arg\max}_{\mathbf{a} \in \mathcal{A}} \sum_{k=1}^{K} u_j(\mathbf{a}, \mathbf{w^0}).$$

Let $\mathbf{\Bar{u}^\ell}$ denote the utilities of the players for the optimal action profile $\mathbf{a^*}$ during epoch $\ell$. Note that $\mathbf{\Bar{u}^\ell}$ corresponds to the utilities with the action profile $\mathbf{a^*}$ and the adversarial attack $\mathbf{w^0}$. The optimal state of the Markov chain is then ${z^*}^\ell = [\mathbf{a^*},\mathbf{\Bar{u}^\ell},C^K ]$ during epoch $\ell$. Note that optimal state differs only in the baseline utilities for epochs $\lceil\frac{\ell}{2}\rceil$ to $\ell$.  In order to bound the probability of the event $\{\mathbf{a^*} \neq \mathbf{a'}\}$, we use the Chernoff Hoeffding bounds for Markov chains from \cite{chung}, which is also used in \cite{got} and \cite{bistritz2020game}. The function $f(z)$ considered here in order to use the bound from \cite[Theorem 3]{chung} for epoch $\ell$ is:

\begin{equation}
    f(z) = \identityf{{z} = {z^*}^{\ell}}, 
\end{equation}

\textit{i.e.} the state is the optimal state. Recall that $\tau_i = {c_2} i^{\delta}$ (replacing $\ell$ by $i$ in $\tau_\ell$). It follows that 

\begin{align}
    P\left\{\mathbf{a'} \neq \mathbf{a^*}\right\} \leq P\left\{\sum_{i = \lceil \frac{\ell}{2}\rceil}^\ell \sum_{t = 1}^{\tau_i} f(z_h) \leq \frac{1}{2}\sum_{i = \lceil \frac{\ell}{2}\rceil}^\ell \tau_i \right\}.
\end{align}

Define 

\begin{equation}
    X_i = \sum_{t = 1}^{\tau_i} \identityf{{z}_t = {z^*}^{i}}
\end{equation}

\begin{equation}
    L = \frac{1}{2}\sum_{i = \lceil \frac{\ell}{2}\rceil}^\ell \tau_i.
\end{equation}

Using the Markov inequality, for some $s > 0$, it follows that

\begin{align}
    P\left\{\sum_{i = \lceil \frac{\ell}{2}\rceil}^\ell X_i \leq \frac{1}{2}\sum_{i = \lceil \frac{\ell}{2}\rceil}^\ell \tau_i \right\} &= P\left\{e^{-s\sum_{i = \lceil \frac{\ell}{2}\rceil}^\ell X_i} \geq e^{-sL} \right\} \\
    &\leq e^{sL}\Pi_{i = \lceil \frac{\ell}{2}\rceil}^\ell E[{e^{-sX_i}}] \label{chernoff}.
\end{align}

In order to use the bound from \cite[Theorem 3]{chung}, we need to calculate $\mu^\ell = E{f(z)} = P({z} = {z^*}^\ell)$. 

Define 

\begin{align}
    \pi_z &= \min_{\lceil \frac{\ell}{2}\rceil \leq i \leq \ell} P\{{z} = {z^*}^i\} \\
    \mu &= \min_{\lceil \frac{\ell}{2}\rceil \leq i \leq \ell} \mu^i.
\end{align}

From the definition of a stochastically stable state, we can choose an $\epsilon$ small enough such that 

\begin{equation}
    \mu = \pi_z > \frac{1}{2(1-\eta)}
\end{equation}

for some $0 < \eta < 1/2$. 

We can now use the bound from \cite[Theorem 3]{chung} for epoch $\lceil \frac{\ell}{2}\rceil \leq i \leq \ell$ to get

\begin{align}
    P\left\{X_i \leq  \frac{\tau_i}{2} \right\} &\leq P\left\{\sum_{t = 1}^{\tau_i} \identityf{{z}_h = {z^*}^{i}} \leq (1 - \eta)\mu^i \tau_i \right\} \\
    &\leq c_0 \|\phi_i\|_\pi \exp\left(- \frac{\eta^2 \mu^i \tau_i}{72T}\right)\\
    &\leq c_0 \|\phi_i\|_\pi \exp\left(- \frac{\eta^2 \mu \tau_i}{72T}\right)
\end{align}

where $c_0 > 0$, $\phi_i$ is the initial distribution of the Markov chain in the $i$-th epoch and $T$ is the 1/8-th mixing time of the Markov chain. 
Using $s = \frac{\eta^2}{(1 - \eta)72T}$ it follows that,  

\begin{align}
    E{e^{-sX_i}} \leq (1 + c_0 \|\phi_i\|_\pi)  \exp\left(- \frac{\eta^2 \mu \tau_i}{72T} \right).
\end{align}

Using the above in \eqref{chernoff}, 

\begin{align}
     &P\left\{\sum_{i = \lceil \frac{\ell}{2}\rceil}^\ell X_i \leq \frac{1}{2}\sum_{i = \lceil \frac{\ell}{2}\rceil}^\ell \tau_i \right\} \\
     &\leq \Pi_{i = \lceil \frac{\ell}{2}\rceil}^\ell (1 + c_0 \|\phi_i\|_\pi) e^{\frac{\eta^2L}{(1 - \eta)72T}} e^{- \frac{\eta^2 \mu \tau_i}{72T}} \\
     &\leq C_0^\ell \exp \left({-\frac{\eta^2(\mu - \frac{1}{2(1 - \eta)})2L}{72T}}\right) \\ 
     &\leq C_0^\ell \exp \left({-\frac{{\eta^2 {c_2}} {2^{-(1 + \delta)}}(\mu - \frac{1}{2(1 - \eta)}) \ell^{1 + \delta}}{72T}}\right) \\
     &\leq \left(C_0 \exp \left( -C_\rho \ell^{\delta} \right)\right)^\ell
\end{align}

where $C_0 = \max_{\lceil \frac{\ell}{2}\rceil \leq i \leq \ell} (1 + c_0 \|\phi_i\|_\pi)$, and 
\begin{equation}\label{crho}
 C_\rho = \frac{{\eta^2 {c_2}} {2^{-(1 + \delta)}}(\mu - \frac{1}{2(1 - \eta)}) }{72T} > 0.   
\end{equation}

\end{proof}

\bibliographystyle{abbrv}
\bibliography{ref}

\end{document}